\icmltitlerunning{Temporal Poisson Square Root Graphical Models}
\newcommand{\curly}[1]{\left\{#1\right\}}
\newcommand{\norm}[1]{\lVert #1 \rVert}
\newcommand{\Norm}[1]{\left\lVert #1 \right\rVert}
\newcommand{\bb}{\mathbf{b}}
\newcommand{\be}{\mathbf{e}}
\newcommand{\bo}{\mathbf{o}}
\newcommand{\bs}{\mathbf{s}}
\newcommand{\bt}{\mathbf{t}}
\newcommand{\bw}{\mathbf{w}}
\newcommand{\bx}{\mathbf{x}}
\newcommand{\bH}{\mathbf{H}}
\newcommand{\bW}{\mathbf{W}}
\newcommand{\bX}{\mathbf{X}}
\newcommand{\bZ}{\mathbf{Z}}
\newcommand{\bzero}{\mathbf{0}}
\newcommand{\bphi}{\bm{\phi}}
\newcommand{\bDelta}{\bm{\Delta}}
\newcommand{\bTheta}{\bm{\Theta}}
\newcommand{\btheta}{\bm{\theta}}
\newcommand{\bomega}{\bm{\omega}}
\newcommand{\balpha}{\bm{\alpha}}
\newcommand{\grad}{\bm{\nabla}}
\newtheorem{theorem}{Theorem}
\newtheorem{lemma}{Lemma}
\newtheorem{assumption}{Assumption}
\DeclarePairedDelimiter\abs{\lvert}{\rvert}%
\let\oldabs\abs
\def\abs{\@ifstar{\oldabs}{\oldabs*}}
\newcommand\given[1][]{\:#1\vert\:}
\begin{document}

\twocolumn[
\icmltitle{Temporal Poisson Square Root Graphical Models}

% It is OKAY to include author information, even for blind
% submissions: the style file will automatically remove it for you
% unless you've provided the [accepted] option to the icml2018
% package.

% List of affiliations: The first argument should be a (short)
% identifier you will use later to specify author affiliations
% Academic affiliations should list Department, University, City, Region, Country
% Industry affiliations should list Company, City, Region, Country

% You can specify symbols, otherwise they are numbered in order.
% Ideally, you should not use this facility. Affiliations will be numbered
% in order of appearance and this is the preferred way.
\icmlsetsymbol{equal}{*}

\begin{icmlauthorlist}
\icmlauthor{Sinong Geng*}{wisc}
\icmlauthor{Zhaobin Kuang*}{wisc}
\icmlauthor{Peggy Peissig}{mcrf}
\icmlauthor{David Page}{wisc}
\end{icmlauthorlist}

\icmlaffiliation{wisc}{The University of Wisconsin, Madison}
\icmlaffiliation{mcrf}{Marshfield Clinic Research Institute}

\icmlcorrespondingauthor{Sinong Geng}{sgeng2@wisc.edu}

% You may provide any keywords that you
% find helpful for describing your paper; these are used to populate
% the "keywords" metadata in the PDF but will not be shown in the document
\icmlkeywords{Machine Learning, ICML}

\vskip 0.3in
]

% this must go after the closing bracket ] following \twocolumn[ ...

% This command actually creates the footnote in the first column
% listing the affiliations and the copyright notice.
% The command takes one argument, which is text to display at the start of the footnote.
% The \icmlEqualContribution command is standard text for equal contribution.
% Remove it (just {}) if you do not need this facility.

\printAffiliationsAndNotice{
Sinong Geng and Zhaobin Kuang contribute equally. Their names are listed in alphabetical order.
}  % leave blank if no need to mention equal contribution
%\printAffiliationsAndNotice{\icmlEqualContribution} % otherwise use the standard text.

\begin{abstract}
We propose temporal Poisson square root graphical models (TPSQRs), a generalization of Poisson square root graphical models (PSQRs) specifically designed for modeling longitudinal event data. By estimating the temporal relationships for all possible pairs of event types, TPSQRs can offer a holistic perspective about whether the occurrences of any given event type could excite or inhibit any other type. A TPSQR is learned by estimating a collection of interrelated PSQRs that share the same template parameterization. These PSQRs are estimated jointly in a pseudo-likelihood fashion, where Poisson pseudo-likelihood is used to approximate the original more computationally-intensive pseudo-likelihood problem stemming from PSQRs. Theoretically, we demonstrate that under mild assumptions, the Poisson pseudo-likelihood approximation is \emph{sparsistent} for recovering the underlying PSQR. Empirically, we learn TPSQRs from Marshfield Clinic electronic health records (EHRs) with millions of drug prescription and condition diagnosis events, for adverse drug reaction (ADR) detection. Experimental results demonstrate that the learned TPSQRs can recover ADR signals from the EHR effectively and efficiently.
\end{abstract}

\section{Introduction}
\label{sec:intro}
Longitudinal event data (LED) and the analytics challenges therein are ubiquitous now. In business analytics, purchasing events of different items from millions of customers are collected, and retailers are interested in how a distinct market action or the sales of one particular type of item could boost or hinder the sales of another type \citep{han2011data}. In search analytics, web search keywords from billions of web users are usually mapped into various topics (e.g.~travel, education, weather), and search engine providers are interested in the interplay among these search topics for a better understanding  of user preferences \citep{gunawardana2011model}. In health analytics, electronic health records (EHRs) contain clinical encounter events from millions of patients collected over decades, including drug prescriptions, biomarkers, and  condition diagnoses, among others. Unraveling the relationships between different drugs and different conditions is vital to answering some of the most pressing medical and scientific questions such as drug-drug interaction detection \citep{tatonetti2012data}, comorbidity identification, adverse drug reaction (ADR) discovery \citep{simpson2013multiple, bao2017hawkes, kuang2017pharmacovigilance}, computational drug repositioning \citep{kuang2016baseline, kuang2016computational}, and precision medicine \citep{liu2013genetic, liu2014new}. 

All these analytics challenges beg the statistical modeling question: \emph{can we offer a comprehensive perspective about the relationships between the occurrences of all possible pairs of event types in longitudinal event data?} In this paper, we propose a solution via temporal Poisson square root graphical models (TPSQRs), a generalization of Poisson square root graphical models (PSQRs, \citealt{inouye2016square})  made in order to represent multivariate distributions among count variables evolving temporally in LED. 

The reason why conventional undirected graphical models (UGMs) are not readily applicable to LED is the lack of mechanisms to address the \emph{temporality} and \emph{irregularity} in the data. Conventional UGMs \citep{liu2013bayesian, liu2014learning, yang2015graphical, liu2016multiple, kuang2017screening, geng2018stochastic} focus on estimating the co-occurrence relationships among various variables rather than their temporal relationships, that is, how the occurrence of one type of event may affect the future occurrence of another type. Furthermore, existing temporal variants of UGMs \citep{kolar2010estimating, yang2015fused} usually assume that data are regularly sampled, and observations for all variables are available at each time point. Neither assumption is true, due to the irregularity of LED. 

In contrast to these existing UGM models, a TPSQR models temporal relationships. First, by \emph{data aggregation}, a TPSQR extracts a sequence of time-stamped summary count statistics of distinct event types that preserves the relative temporal order in the raw data for each subject. A PSQR is then used to model the joint distribution among these summary count statistics for each subject. Different PSQRs for different subjects are assumed to share the same \emph{template parameterization} and hence can be learned jointly by estimating the template in a pseudo-likelihood fashion. To address the challenge in temporal irregularity, we compute the exact time difference between each pair of time-stamped summary statistics, and decide whether a difference falls into a particular predefined time interval, hence transforming the irregular time differences into regular timespans. We then incorporate the effects of various timespans into the template parameterization as well as PSQR constructions from the template.

By addressing temporality and irregularity of LED in this fashion, TPSQR is also different from many point process models \citep{gunawardana2011model, weiss2012multiplicative, weiss2013forest, du2016recurrent}, which usually strive to pinpoint the exact occurrence times of events, and offer  generative mechanisms to event trajectories. TPSQR, on the other hand, adopts a coarse resolution approach to temporal modeling via the aforementioned data aggregation and time interval construction. As a result, TPSQR focuses on estimating stable relationships among occurrences of different event types, and does not model the precise event occurrence timing. This behavior is especially meaningful in application settings such as ADR discovery, where the importance of identifying the occurrence of an adverse condition caused by the prescription of a drug usually outweighs knowing about the exact time point of the occurrence of the ADR, due to the high variance of the onset time of ADRs \citep{schuemie2016detecting}.

Since TPSQR is a generalization of PSQR, many desirable properties of PSQR are inherited by TPSQR. For example, TPSQR, like PSQR, is capable of modeling both positive and negative dependencies between covariates. Such flexibility cannot usually be taken for granted when modeling a multivariate distribution over count data due to the potential dispersion of the partition function of a graphical model \citep{yang2015graphical}. TPSQR can be learned by solving the pseudo-likelihood problem for PSQR.  For efficiency and scalability, we use Poisson pseudo-likelihood to approximately solve the original pseudo-likelihood problem induced by a PSQR, and we show that the Poisson pseudo-likelihood approximation can recover the structure of the underlying PSQR under mild assumptions. Finally, we demonstrate the utility of TPSQRs using Marshfield Clinic EHRs with millions of drug prescription and condition diagnosis events for the task of adverse drug reaction (ADR) detection. Our contributions are three-fold:

\begin{itemize}[leftmargin=*]
\item TPSQR	is a generalization of PSQR made in order to represent the multivariate distributions among count variables evolving temporally in LED. TPSQR can accommodate both positive and negative dependencies among covariates, and can be learned efficiently via the pseudo-likelihood problem for PSQR.

\item In terms of advancing the state-of-the-art of PSQR estimation, we propose Poisson pseudo-likelihood approximation in lieu of the original more computationally-intensive conditional distribution induced by the joint distribution of a PSQR. We show that under mild assumptions, the Poisson pseudo-likelihood approximation procedure is sparsistent \citep{ravikumar2007spam} with respect to the underlying PSQR. Our theoretical results not only justify the use of the more efficient Poisson pseudo-likelihood over the original conditional distribution for better estimation efficiency of PSQR but also establish a formal correspondence between the more intuitive but less stringent local Poisson graphical models \citep{allen2013local} and the more rigorous but less convenient PSQRs.

\item  We apply TPSQR to Marshfield Clinic EHRs to determine the relationships between the occurrences of various drugs and the occurrences of various conditions, and offer more accurate estimations for adverse drug reaction (ADR) discovery, a challenging task in health analytics due to the (thankfully) rare and weak ADR signals encoded in the data, whose success is crucial to improving healthcare both financially and clinically \citep{sultana2013clinical}. 
\end{itemize}

%The rest of the paper is organized as follows. Section~\ref{sec:backgrond} introduces necessary background and notation. Section~\ref{sec:model} presents the modeling framework of TPSQR. Section~\ref{sec:estimation} describes our Poisson pseudo-likelihood estimation procedure for TPSQR and its statistical guarantees for recovering the structure of the underlying PSQR. Section~\ref{sec:experiment} conducts experiments for ADR discovery using TPSQR. And finally, we conclude our paper in Section~\ref{sec:conclusion}.

\section{Background}
\label{sec:backgrond}

We show how to deal with the challenges in temporality and irregularity mentioned in Section~\ref{sec:intro} via the use of data aggregation and an influence function for LED. We then define the template parameterization that is central to the modeling of TPSQRs. 

\subsection{Longitudinal Event Data}
\label{sec:led}
Longitudinal event data are time-stamped events of finitely many types collected across various subjects over time. Figure~\ref{fig:led} visualizes the LED for two subjects. As shown in Figure~\ref{fig:led}, the occurrences of different event types are represented as arrows in different colors. No two events for one subject occur at the exact same time. We are interested in modeling the relationships among the occurrences of different types of events via TPSQR. 

\begin{figure*}[t]
\centering
\includegraphics[scale=1.2]{./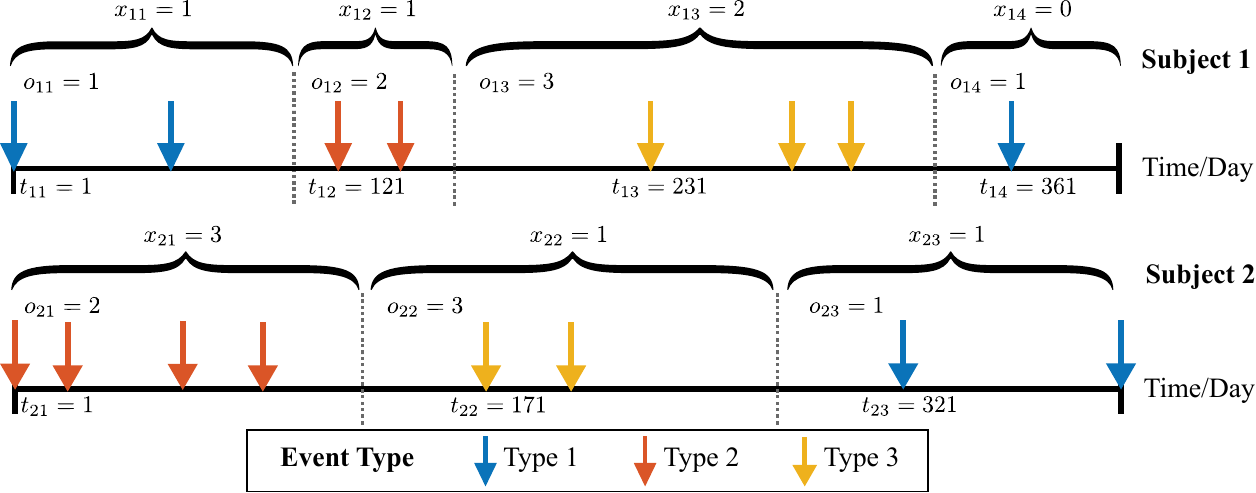}
\caption{Visualization of longitudinal event data from two subjects. Curly brackets denote the timespans during which events of only one type occur.  $x_{ij}$'s represent the number of subsequent occurrences after the first occurrence.\comment{ hence $x_{ij}$'s are always one less than the total number of arrows contained in each timespan.} $o_{ij}$'s are the types of events in various timespans.}
\label{fig:led}
\end{figure*}

\subsection{Data Aggregation}
\label{sec:data-aggreation}
To enable PSQRs to cope with the temporality in LED, TPSQRs start from extracting relative-temporal-order-preserved summary count statistics from the raw LED via data aggregation, to cope with the high volume and frequent consecutive replications of events of the same type that are commonly observed in LED. Take Subject 1 in Figure~\ref{fig:led} as an illustrative example; we divide the raw data of Subject 1 into four timespans by the dashed lines. Each of the four timespans contains only events of the same type. We use three statistics to summarize each timespan: the time stamp of the first occurrence of the event in each timespan: $t_{11}=1$, $t_{12}=121$, $t_{13}=231$, and $t_{14}=361$; the event type in each timespan: $o_{11}=1$, $o_{12}=2$, $o_{13}=3$, and $o_{14}=1$; and the counts of subsequent occurrences in each timespan: $x_{11}=1$, $x_{12}=1$, $x_{13}=2$, and $x_{14}=0$. Note that the reason  $x_{14}=0$ is that there is only one occurrence of event type $1$ in total during timespan $4$ of subject $1$. Therefore, the number of subsequent occurrence after the first and only occurrence is $0$.

Let there be $N$ independent subjects and $p$ types of events in a given LED $\mathbb{X}$. We denote by $n_i$ the number of timespans during which only one type of event occurs to subject $i$, where $i \in \curly{1,2,\cdots,N}$.  The $j^{th}$ timespan of the $i^{th}$ subject can be represented by the vector $\bs_{ij}:=\begin{bmatrix} t_{ij} & o_{ij} & x_{ij} \end{bmatrix}^\top$, where $j \in \curly{1,2,\cdots, n_i}$, and ``$:=$'' represents ``defined as." $t_{ij} \in [0,+\infty)$ is the time stamp at which the first event occurs during the timespan $\bs_{ij}$. Furthermore, $t_{11} < t_{12} < \cdots < t_{1n_i}$. $o_{ij} \in \curly{1,2,\cdots,p}$ represents the event type in $\bs_{ij}$. $o_{ij}\ne o_{i(j+1)}$, $\forall i \in \curly{1,2,\cdots,N}$ and $\forall j<n_i$. $x_{ij} \in \mathbb{N}$ is the number of subsequent occurrences of events of the same type in $\bs_{ij}$.

\subsection{Influence Function}
Let $\bs_{ij}$ and $\bs_{ij'}$ be given, where $j < j' \le n_i$. To handle the irregularity of the data, we map the time difference $t_{ij'} - t_{ij}$ to a one-hot vector that represents the activation of a time interval using an \emph{influence function} $\bphi(\cdot)$, a common mechanism widely used in point process models and signal processing. In detail, let $L+1$ user-specified time-threshold values be given, where $0=\tau_0 < \tau_1 < \tau_2 < \cdots < \tau_L$. $\bphi(\tau)$ is a $L \times 1$ one hot vector whose $l^{th}$ component is defined as:
\begin{equation}
\label{eq:phi}
[\bphi(\tau)]_l := \begin{cases}
1, &  \tau_{l-1} \le \tau < \tau_l\\
0, &  \text{otherwise}
\end{cases},
\end{equation}
where $l \in \curly{1,2,\cdots,L}$. In our case, we let $\tau := t_{ij'}-t_{ij}$ to construct $\bphi(\tau)$ according to \eqref{eq:phi}. Widely used influence functions in signal processing include the dyadic wavelet function and the Haar wavelet function \citep{mallat2008wavelet}; both are piecewise constant and hence share similar representation to \eqref{eq:phi}.

%Here, we use $\bphi_{ijj'}(\tau)$ instead of $\bphi(\tau)$ to emphasize the fact that the value of the temporal influence function constructed is dependent on the $j^{th}$ and the ${j'}^{th}$ time interval of the $i^{th}$ patient. 

\subsection{Template Parameterization}
\label{sec:template-para}
Template parameterization provides the capability of TPSQRs to represent the effects of all possible (ordered) pairs of event types on all time scales. Specifically, let an ordered pair $(k,k') \in \curly{1,2,\cdots,p}^2$ be given. Let $0=\tau_0 < \tau_1 < \tau_2 < \cdots < \tau_K$ also be given. For the ease of presentation, we assume that $k \neq k'$, which can be easily generalized to $k = k'$. Considering a particular patient, we are interested in knowing the effect of an occurrence of a type $k$ event towards a subsequent occurrence of a type $k'$ event, when the time between the two occurrences falls in the $l^{th}$ time window specified via \eqref{eq:phi}. Enumerating all $L$ time windows, we have:
\begin{equation}
\label{eq:w}
\bw_{kk'} := \begin{bmatrix}
w_{kk'1} & w_{kk'2} & \cdots & w_{kk'L}
\end{bmatrix}^\top.
\end{equation}
Note that since $(k,k')$ is ordered, $\bw_{k'k}$ is different from $\bw_{kk'}$. We further define $\bW$ as a $(p-1)p \times L$ matrix that stacks up all $\bw_{kk'}^\top$'s. In this way, $\bW$ includes all possible pairwise temporally bidirectional relationships among the $p$ variables on different time scales, offering holistic representation power. To represent the intrinsic prevalence effect of the occurrences of events of various types, we further define $\bomega := \begin{bmatrix} \omega_1 & \omega_2 & \cdots & \omega_p
\end{bmatrix}^\top$. We call $\bomega$ and $\bW$ the template parameterization, from which we will generate the parameters of various PSQRs as shown in Section~\ref{sec:model}.

\section{Modeling}
\label{sec:model}
Let $\bs_{ij}$'s be given where $j \in \curly{1,2,\cdots,n_i}$; we demonstrate the use of the influence function and template parameterization to construct a PSQR for subject $i$.  

Let $\bt_i := \begin{bmatrix}
t_{i1} \ t_{i2} \ \cdots \ t_{in_i}
\end{bmatrix}^\top$, $\bo_i := \begin{bmatrix}
o_{i1} \ o_{i2} \ \cdots \ o_{in_i}
\end{bmatrix}^\top$, and $\bx_i := \begin{bmatrix}
x_{i1} & x_{i2} & \cdots & x_{in_i}
\end{bmatrix}^\top$. Given $\bt_i$ and $\bo_i$, a TPSQR aims at modeling the joint distribution of counts $\bx_i$ using a PSQR. Specifically, under the template parameterization $\bomega$ and $\bW$, we first define a symmetric parameterization $\bTheta^{(i)}$ using $\bt_i$ and $\bo_i$. The component of $\bTheta^{(i)}$ at the $j^{th}$ row and the ${j'}^{th}$ column is:
\begin{equation}
\label{eq:omega-i}
\theta_{jj'}^{(i)}:= [\bTheta^{(i)}]_{jj'} 
\hspace{-1mm} := \hspace{-1mm} \begin{cases}
\omega_{o_{ij}}, & \hspace{-2mm} j=j'\\
\bw_{o_{ij}o_{ij'}}^\top \bphi (\lvert t_{ij'} -t_{ij} \rvert), & \hspace{-2mm} j < j'\\
[\bTheta^{(i)}]_{j'j}, & \hspace{-2mm} j>j'
\end{cases}.
\end{equation}
We then can use $\bTheta^{(i)}$ to parameterize a PSQR that gives a joint distribution over $\bx_i$ as:
\begin{align}
\label{eq:psqr-joint-prob}
\begin{split} 
\text{P} \Big(\bx_i; \bTheta^{(i)} \Big) \hspace{-1mm} := & \exp \hspace{-1mm} \Bigg[ \sum_{j=1}^{n_i} \theta_{jj}^{(i)} \sqrt{x_{ij}} + \sum_{j=1}^{n_i-1}\sum_{j'>j}^{n_i} \theta_{jj'}^{(i)} \sqrt{x_{ij}x_{ij'}}\\
- & \sum_{j=1}^{n_i} \log (x_{ij}!) - A_{n_i}\left(\bTheta^{(i)}\right) \Bigg].
\end{split}
\end{align}
In \eqref{eq:psqr-joint-prob},  $A_{n_i}(\bTheta^{(i)})$ is a normalization constant called the log-partition function that ensures the legitimacy of the probability distribution in question:
\begin{align}
\label{eq:A}
\begin{split}
A_{n_i}&\left(\bTheta^{(i)}\right):= \log \hspace{-2mm} \sum_{\bx \in \mathbb{N}^{n_i}}\hspace{-2mm} \exp \Bigg[ \sum_{j=1}^{n_i} \theta_{jj}^{(i)} \sqrt{x_j}\\
& + \sum_{j=1}^{n_i-1}\sum_{j'>j}^{n_i} \theta_{jj'}^{(i)} \sqrt{x_{j}x_{j'}} - \sum_{j=1}^{n_i} \log (x_{j}!) \Bigg].
\end{split}
\end{align}
Note that in \eqref{eq:A} we emphasize the dependency of the partition function upon the dimension of $\bx$ using the subscript $n_i$, and $\bx := \begin{bmatrix}
x_1 & x_1 & \cdots & x_{n_i} \end{bmatrix}$.

To model the joint distribution of $\bx_i$, TPSQR directly uses $\bTheta^{(i)}$, which is extracted from $\bomega$ and $\bW$ via \eqref{eq:omega-i} depending on the individual and temporal irregularity of the data characterized by $\bt_i$ and $\bo_i$. Therefore, $\bomega$ and $\bW$ serve as a template for constructing $\bTheta^{(i)}$'s, and hence provide a ``template parameterization.'' Since there are $N$ subjects in total in the dataset, and each $\bTheta^{(i)}$ offers a personalized PSQR for one subject, TPSQR is capable of learning a collection of interrelated PSQRs due to the use of the template parameterization. Recall the well-rounded representation power of a template shown in Section~\ref{sec:template-para}; learning the template parameterization via TPSQR can hence offer a comprehensive perspective about the relationships for all possible temporally ordered pairs of event types.

Furthermore, since TPSQR is a generalization of PSQR, it inherits many desirable properties enjoyed by PSQR. A most prominent property is its capability of accommodating both positive and negative dependencies between variables. Such flexibility in general cannot be taken for granted when modeling multivariate count data. For example, a Poisson graphical model \citep{yang2015graphical} can only represent negative dependencies due to the diffusion of its log-partition function when positive dependencies are involved. Yet for example one drug (e.g., the blood thinner Warfarin) can have a positive influence on some conditions (e.g., bleeding) and a  negative influence on others (e.g., stroke). We refer interested readers to \citealt{allen2013local, yang2013poisson, inouye2015fixed, yang2015graphical, inouye2016square} for more details of PSQRs and other related Poisson graphical models.

%The maximum log-likelihood estimation (MLE) of TPSQR is to find a template parameterization that maximizes the joint likelihood over the data from the $N$ independent subjects in $\mathbb{X}$:
%\begin{equation*}
%\max_{\bomega;\bW} \frac{1}{N}\sum_{i=1}^{N} \log \text{P} \Big(\bx_i; \bTheta^{(i)} \Big).
%\end{equation*}
%
%However, the MLE of TPSQR involves the intractable log-partition function in \eqref{eq:A}. Therefore, in practice, MLE of TPSQR is circumvented, and we adopt a pseudo-likelihood approach similar to that used in PSQR for estimation to facilitate better efficiency and scalability. This will be the focus in Section~\ref{sec:estimation}.

\section{Estimation}
\label{sec:estimation}

In this section, we present the pseudo-likelihood estimation problem for TPSQR. We then point out that solving this problem can be inefficient, which leads to the proposed Poisson pseudo-likelihood approximation to the original pseudo-likelihood problem.

\subsection{Pseudo-Likelihood for TPSQR}
\label{sec:pl-tpsqr}
We now present our estimation approach for TPSQR based on pseudo-likelihood. We start from considering the pseudo-likelihood for a given $i^{th}$ subject. By \eqref{eq:psqr-joint-prob}, the log probability of $x_{ij}$ conditioned on $\bx_{i,-j}$, which is an $(n_i-1)\times 1$ vector constructed by removing the $j^{th}$ component from $\bx_i$, is given as:
\allowdisplaybreaks
\begin{align}
\label{eq:cond-dist}
\begin{split}
\log & \text{P} \bigg( x_{ij} \given \bx_{i,-j}; \btheta^{(i)}_j\bigg) =  - \log (x_{ij}!) +\\
\bigg( & \theta_{jj}^{(i)} + {\btheta_{j,-j}^{(i)\top}} \sqrt{\bx_{i,-j}} \bigg) \sqrt{x_{ij}} 
 - \tilde{A}_{n_i}\left(\btheta^{(i)}_j\right),
\end{split}
\end{align}
where $\btheta_j^{(i)}$ is the $j^{th}$ column of $\bTheta^{(i)}$ and hence
\begin{align}
\btheta_j^{(i)} :=& \begin{bmatrix}
\theta_{1j}^{(i)} \ \  \cdots \ \ \theta_{j-1,j}^{(i)} \quad  \theta_{jj}^{(i)} \quad \theta_{j+1,j}^{(i)} \ \ \cdots \ \ \theta_{n_i,j}^{(i)}
\end{bmatrix}^\top \nonumber \\
\label{eq:theta-row-j}
:= & \begin{bmatrix}
\theta_{1j}^{(i)} \ \  \cdots \ \ \theta_{j-1,j}^{(i)} \quad  \theta_{jj}^{(i)} \quad \theta_{j,j+1}^{(i)} \ \ \cdots \ \ \theta_{j,n_i}^{(i)}
\end{bmatrix}^\top \hspace{-2mm}.
\end{align}
In \eqref{eq:theta-row-j}, by the symmetry of $\bTheta^{(i)}$, we rearrange the index after $\theta_{jj}^{(i)}$ to ensure that the row index is no larger than the column index so that the parameterization is consistent with that in \eqref{eq:psqr-joint-prob}. We will adhere to this convention in the subsequent presentation. Furthermore, $\btheta_{j,-j}^{(i)}$ is an $(n_i-1)\times 1$ vector constructed from $\btheta_j^{(i)}$ by excluding its $j^{th}$ component, and $\sqrt{\bx_{i,-j}}$ is constructed by taking the square root of each component of $\bx_{i,-j}$. Finally,
\begin{align}
\label{eq:cond-partition}
\begin{split}
\tilde{A}&_{n_i}\left(\btheta^{(i)}_j\right) := \\
&\log \hspace{-2mm}\sum_{\bx \in \mathbb{R}^{n_i}}\hspace{-2mm} \exp \hspace{-1mm}\left[\bigg(  \theta_{jj}^{(i)} + {\btheta_{j,-j}^{(i)\top}} \sqrt{\bx_{i,-j}} \bigg) \sqrt{x_{ij}} 
 - \log (x_{ij}!)\right],
\end{split}
\end{align}
which is a quantity that involves summing up infinitely many terms, and in general cannot be further simplified, leading to potential intractability in computing \eqref{eq:cond-partition}. 

With the conditional distribution in \eqref{eq:cond-dist} and letting $M := \sum_{i=1}^N n_i$, the pseudo-likelihood problem for TPSQR is given as:
\begin{equation}
\label{eq:pseudo-obj}
\max_{\bomega,\bW} \frac{1}{M} \sum_{i=1}^N \sum_{j=1}^{n_i} \log \text{P} \left( x_{ij} \given \bx_{i,-j}; \btheta^{(i)}_j\right).
\end{equation}
\eqref{eq:pseudo-obj} is the maximization over all the conditional distributions of all the count variables for all $N$ personalized PSQRs generated by the template. \emph{Therefore, it can be viewed as a pseudo-likelihood estimation problem directly for $\bomega$ and $\bW$.} However, solving the pseudo-likelihood problem in \eqref{eq:pseudo-obj} involves the predicament of computing the potentially intractable \eqref{eq:cond-partition}, which motivates us to use Poisson pseudo-likelihood as an approximation to \eqref{eq:pseudo-obj}.

\subsection{Poisson Pseudo-Likelihood}

Using the parameter vector $\btheta_j^{(i)}$, we define the conditional distribution of $x_{ij}$ given by $\bx_{i,-j}$ via the Poisson distribution as:
\begin{align}
\label{eq:cond-dist-poisson}
\begin{split}
\hat{\text{P}}&\left( x_{ij} \given \bx_{i,-j};  \btheta^{(i)}_j \right) \propto 
\frac{\exp\hspace{-1mm} \left[ \left(\hspace{-1mm} \theta_{jj}^{(i)} \hspace{-1mm} + {\btheta_{-j}^{(i)\top}} \hspace{-1mm} \bx_{i,-j} \hspace{-1mm}\right)  x_{ij} \right]} {x_{ij}!}.
\end{split}
\end{align}
Notice the similarity between \eqref{eq:cond-dist} and \eqref{eq:cond-dist-poisson}. We can define the sparse Poisson pseudo-likelihood problem similar to the original pseudo-likelihood problem by replacing $\log \text{P} \left(x_{ij} \given \bx_{i,-j};  \btheta^{(i)}_j \right)$ with $\log \hat{\text{P}} \left(x_{ij} \given \bx_{i,-j};  \btheta^{(i)}_j \right)$  :
\begin{equation}
\label{eq:pseudo-poisson}
\max_{\bomega,\bW} \frac{1}{M} \hspace{-1mm} \sum_{i=1}^N \sum_{j=1}^{n_i} \log \hat{\text{P}} \left( \hspace{-0.5mm} x_{ij} \given \bx_{i,-j}; \btheta^{(i)}_j\hspace{-0.5mm}\right) - \lambda \norm{\bW}_{1,1},
\end{equation}
where $\lambda \ge 0$ is the regularization parameter, and the penalty
\begin{equation*}
\norm{\bW}_{1,1} :=  \sum_{i=1}^{(p-1)p}\sum_{j=1}^L \left\lvert [\bW]_{ij} \right\rvert
\end{equation*}
is used to encourage sparsity over the template parameterization $\bW$ that determines the interactions between the occurrences of two distinct event types. As mentioned at the end of Section~\ref{sec:pl-tpsqr}, TPSQR learning is equivalent to learning a PSQR over the template parameterization. Therefore, the sparsity penalty induced here is helpful to recover the structure of the underlying graphical model.

The major advantage of approximating the original pseudo-likelihood problem with Poisson pseudo-likelihood is the gain in computational efficiency. Based on the construction in \citealt{geng2017efficient}, \eqref{eq:pseudo-poisson} can be formulated as an $l_1$-regularized Poisson regression problem, which can be solved much more efficiently via many sophisticated algorithms and their implementations \citep{friedman2010regularization, tibshirani2012strong} compared to solving the original problem that involves the potentially challenging computation  for \eqref{eq:cond-partition}. Furthermore, in the subsequent section, we will show that even though the Poisson pseudo-likelihood is an approximation procedure to the pseudo-likelihood of PSQR, under mild assumptions Poisson pseudo-likelihood is still capable of recovering the structure of the underlying PSQR.

\subsection{Sparsistency Guarantee}
\label{sec:theory}
For the ease of presentation, in this section we will reuse much of the notation that appears previously. The redefinitions introduced in this section only apply to the contents in this section and the related proofs in the Appendix. Recall at the end of Section~\ref{sec:pl-tpsqr}, the pseudo-likelihood problem of TPSQR can be viewed as learning a PSQR parameterized by the template. Therefore, without loss of generality, we will consider a PSQR over $p$ count variables $\bX = \bx \in \mathbb{N}^p$ parameterized by a $p \times p$ symmetric matrix $\bTheta^*$, where $\bX:=\begin{bmatrix}
X_1 & X_2 & \cdots & X_p \end{bmatrix}^\top$ is the multivariate random variable, and $\bx$ is an assignment to $\bX$. We use $\norm{\cdot}_{\infty}$ to represent the infinity norm of a vector or a matrix. Let $\mathbb{X} := \curly{ \bx_1,\bx_2,\cdots,\bx_n}$ be a dataset with $n$  independent and identically distributed (i.i.d.) samples generated from the PSQR. Then the joint probability distribution over $\bx$ is:
\begin{align*}
\begin{split}
\text{P} (\bx; \bTheta^* ) \hspace{-1mm} := \exp \hspace{-1mm} \Bigg[ & \sum_{j=1}^p \theta_{jj}^* \sqrt{x_{ij}} + \sum_{j=1}^{p-1}\sum_{j'>j}^p \theta_{jj'}^* \sqrt{x_{ij}x_{ij'}} 
\\- & \sum_{j=1}^{p} \log (x_{ij}!) - A\left(\bTheta^*\right) \Bigg],
\end{split}
\end{align*}
where $A\left(\bTheta^*\right)$ is the log-partition function, and the corresponding Poisson pseudo-likelihood problem is:
\begin{equation}
\label{eq:pre}
\hat{\bTheta} := \arg\min_\bTheta F(\bTheta) + \lambda\norm{\bTheta}_{1,\text{off}},
\end{equation}
where
\begin{align}
\label{eq:F}
\begin{split}
F(\bTheta):=\frac{1}{n}\sum_{i=1}^n \sum_{j=1}^p \Big[-& (\theta_{jj} + \btheta_{j,-j}^\top \bx_{i,-j} )x_{ij} \\
+ & \exp (\theta_{jj} + \btheta_{j,-j}^\top \bx_{i,-j})  \Big],
\end{split}
\end{align}
and $\norm{\bTheta}_{1,\text{off}}$ represents imposing $l_1$ penalty over all but the diagonal components of $\bTheta$. 

\emph{Sparsistency \citep{ravikumar2007spam} addresses whether $\hat{\bTheta}$ can recover the structure of the underlying $\bTheta^*$ with high probability using $n$ i.i.d.~samples.} In what follows, we will show that $\hat{\bTheta}$ is indeed sparsistent under mild assumptions. 
%Specifically, we will focus on the more challenging high-dimensional setting where $n < p$. \charles{mentioned HD here?}

We use $\mathbb{E}[\cdot]$ to denote the expectation of a random variable under $\text{P} (\bx; \bTheta^* )$.
The first assumption is about the boundedness of $\mathbb{E}[\bX]$, and the boundedness of the partial second order derivatives of a quantity related to the log-partition $A\left(\bTheta^*\right)$. This assumption is standard in the analysis of pseudo-likelihood methods \citep{yang2015graphical}.

\begin{assumption}
\label{asm:1}
$\norm{\mathbb{E}[\bX]}_{\infty} \le C_1$ for some $C_1 > 0$. Let
\begin{align*}
B(\bTheta,\bb) := \log \hspace{-1mm} & \sum_{\bx \in \mathbb{N}^p}\hspace{-1mm} \exp \hspace{-1mm} \Bigg[ \sum_{j=1}^p \theta_{jj} \sqrt{x_j} + {\bb}^\top \bx \\
+ & \sum_{j=1}^{p-1}\sum_{j'>j}^p \theta_{jj'} \sqrt{x_{j}x_{j'}} - \sum_{j=1}^p \log (x_{j}!) \Bigg].
\end{align*}
For some $C_2>0$, and $\forall k\in[0,1]$,
\begin{equation*}
\forall j \in \curly{1,2,\cdots,p},\quad \frac{\partial^2 B(\bTheta,\bzero+k \be_j)}{\partial^2 b_j} \le C_2,
\end{equation*}
where $\be_j$ is the one-hot vector with the $j^{th}$ component as $1$.
\end{assumption}

The following assumption characterizes the boundedness of the conditional distributions given by the PSQR under $\bTheta^*$ and by the Poisson approximation using the same $\bTheta^*$.

\begin{assumption}
\label{asm:2}
Let $\lambda_{ij}^*:= \exp\left(\theta_{jj}^*+\btheta_{j,-j}^{*\top}\bx_{i,-j}\right)$ be the mean parameter of a Poisson distribution. Then $\forall i \in \curly{1,2\cdots,n}$ and $\forall j \in \curly{1,2,\cdots,p}$, for some $C_3>0$ and  $C_4>0$, we have that $\mathbb{E} \left[X_j \given \bx_{i,-j} \right] \le C_3$ and 
$\abs{\lambda_{ij}^* -  \mathbb{E} \left[X_j \given \bx_{i,-j}\right]} \le C_4$.
\end{assumption}

The third assumption is the mutual incoherence condition vital to the sparsistency of sparse statistical learning with $l_1$-regularization. Also, with a slight abuse of notation, in the remaining of Section~\ref{sec:theory} as well as in the corresponding proofs, we should view $\bTheta$ as a vector generated by stacking up $\theta_{jj'}$'s, where $j \le j'$, whenever it is clear from context.

\begin{assumption}
\label{asm:3} 
Let $\bTheta^*$ be given. Define the index sets
\begin{gather*}
A := \curly{(j,j') \mid \theta_{jj'}^*\ne 0, j\ne j', j,j' \in \curly{1,2,\cdots,p} },\\
D := \curly{(j,j) \mid j \in \curly{1,2,\cdots,p}},\quad S := A \cup D,\\
I := \curly{(j,j') \mid \theta_{jj'}^*= 0, j\ne j', j,j' \in \curly{1,2,\cdots,p}}.
\end{gather*}
Let $\bH:=\grad^2 F(\bTheta^*)$. Then for some $0<\alpha<1$ and $C_5 > 0$, we have $\Norm{\bH_{IS}\bH_{SS}^{-1}}_{\infty} \le 1-\alpha$ and $\Norm{\bH_{SS}^{-1}}_{\infty}\le C_5$, where we use the index sets as subscripts to represent the corresponding components of a vector or a matrix.
\end{assumption}

The final assumption characterizes the second-order Taylor expansion of $F(\bTheta^*)$ at a certain direction $\bDelta$. %\charles{$\bTheta^*$ and $\bDelta$ do not agree with each other.}
\begin{assumption}
\label{asm:4}
Let $\bm{R}(\bDelta)$ be the second-order Taylor expansion remainder of $\grad F(\bTheta)$ around $\bTheta = \bTheta^*$ at direction $\bDelta:=\bTheta-\bTheta^*$ (i.e.~$\grad F(\bTheta) = \grad F(\bTheta^*) + \grad^2 F(\bTheta^*) (\bTheta-\bTheta^*)+ \bm{R}(\bDelta)$), where $\norm{\bDelta}_{\infty} \leq r:= 4C_5 \lambda \leq \frac{1}{C_5 C_6}$ with $\bDelta_I = \bzero$, and for some $C_6>0$. Then $\Norm{\bm{R}(\bDelta)}_{\infty} \le C_6 \Norm{\bDelta}_{\infty}^2$.
\end{assumption}

With these mild assumptions, the sparsistency result is stated in Theorem~\ref{thm:sparsistency}.

\begin{theorem}
\label{thm:sparsistency}
Suppose that Assumption~\ref{asm:1} - \ref{asm:4} are all satisfied.  Then, with probability of at least $1-\left(\left(\exp\left(C_1+{C_2}/{2}\right)+8\right)p^{-2}+p^{{-1}/{C_2}}\right)$, $\hat{\bTheta}$ shares the same structure with $\bTheta^*$, if for some constant $C_7>0$,
\begin{align*}
\lambda  &\geq  \frac{8}{\alpha} \left[C_3(3\log p + \log n) + (3\log p + \log n)^2 \right]\sqrt{\frac{\log p}{n}}
\\&+ 8C_4\left( C_1+ \sqrt{\frac{2\log p}{n}} \right)\alpha,\quad \lambda \leq C_7 \sqrt{\frac{\log^5 p}{n}},\\ 
r & \le \Norm{\bTheta^*_S}_{\infty}, \text{ and } n\ge \left(64C_7  C_5^2C_6/\alpha\right)^2\log^5 p.
\end{align*}
\end{theorem}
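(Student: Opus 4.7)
The plan is to adapt the primal-dual witness (PDW) construction from the sparse graphical model literature (as in Ravikumar et al.\ and Yang et al.~2015) to the present setting, where an additional obstacle arises because $F(\bTheta)$ is \emph{not} the negative log-likelihood of the PSQR but only a Poisson surrogate. First, I would construct an ``oracle'' estimator $\tilde{\bTheta}$ by solving the restricted program $\min_{\bTheta_I = \bzero} F(\bTheta) + \lambda \norm{\bTheta}_{1,\text{off}}$. By Assumption~\ref{asm:3}, the restricted Hessian $\bH_{SS}$ is invertible, so $\tilde{\bTheta}$ is well-defined; the goal is then to show that $\tilde{\bTheta}$ (i) lies in a small $\ell_\infty$ ball of radius $r = 4C_5\lambda$ around $\bTheta^*$ and (ii) satisfies strict dual feasibility on $I$, which upgrades $\tilde{\bTheta}$ to the unique solution $\hat{\bTheta}$ of the unrestricted program~\eqref{eq:pre} with the correct zero pattern.

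The bulk of the work is an $\ell_\infty$ control of the score $\grad F(\bTheta^*)$. Writing $\grad F(\bTheta^*) = W + B$, where $W$ is the fluctuation of $\grad F(\bTheta^*)$ around its expectation and $B = \mathbb{E}[\grad F(\bTheta^*)]$ is the bias caused by using a Poisson surrogate for the true PSQR conditional, I would bound $\norm{W}_{\infty}$ by a union bound over the $O(p^2)$ coordinates combined with a Bernstein-type concentration inequality, since the entries involve products of $x_{ij}$ and $\exp(\theta_{jj}^* + \btheta_{j,-j}^{*\top}\bx_{i,-j})$ which are sub-exponential rather than sub-Gaussian; this produces the $(3\log p + \log n)^2$ factor in the stated lower bound on $\lambda$, and Assumption~\ref{asm:1} (boundedness of $\mathbb{E}[\bX]$ and the second derivative of $B(\bTheta,\bb)$) delivers the requisite moment and MGF control. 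For $\norm{B}_\infty$, Assumption~\ref{asm:2} gives $\abs{\lambda_{ij}^* - \mathbb{E}[X_j \mid \bx_{i,-j}]} \le C_4$, and combining this with the conditional-mean bound $C_3$ and a separate concentration for $\bx_{i,-j}$ around $\mathbb{E}[\bX]$ yields the second additive piece $8 C_4(C_1 + \sqrt{2\log p / n})\alpha$ in the lower bound.

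Given $\norm{\grad F(\bTheta^*)}_\infty \le \alpha\lambda/4$ (which the choice of $\lambda$ enforces with high probability), I apply the standard PDW argument: the stationarity condition for $\tilde{\bTheta}$ reads $\grad F(\tilde{\bTheta})_S + \lambda\tilde{\bz}_S = \bzero$, and Taylor-expanding via Assumption~\ref{asm:4} gives $\tilde{\bTheta}_S - \bTheta^*_S = -\bH_{SS}^{-1}(\grad F(\bTheta^*)_S + \bm{R}(\bDelta)_S + \lambda \tilde{\bz}_S)$. Using $\norm{\bH_{SS}^{-1}}_\infty \le C_5$, the triangle inequality, and the quadratic remainder bound $\norm{\bm{R}(\bDelta)}_\infty \le C_6 \norm{\bDelta}_\infty^2$, a contraction argument on the ball of radius $r = 4C_5\lambda$ (using $r \le 1/(C_5 C_6)$) shows $\norm{\tilde{\bTheta}_S - \bTheta^*_S}_\infty \le r$. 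Then $r \le \norm{\bTheta^*_S}_\infty$ ensures no true nonzero is killed. For strict dual feasibility on $I$, I would substitute into the $I$-block KKT condition, use the incoherence bound $\norm{\bH_{IS}\bH_{SS}^{-1}}_\infty \le 1-\alpha$, and combine with the gradient and remainder bounds to get $\norm{\tilde{\bz}_I}_\infty < 1$.

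The main obstacle, and the part that departs meaningfully from standard PSQR/Poisson-MRF sparsistency proofs, is controlling the bias term $B$: because $F$ is a misspecified surrogate, $\grad F(\bTheta^*)$ is not mean zero and one must show that the misspecification bias is itself $O(\lambda)$. This is where Assumption~\ref{asm:2} is critical, and the concentration of $\bx_{i,-j}$ must be done carefully so that the bias contribution does not dominate. The upper bound $\lambda \le C_7\sqrt{\log^5 p / n}$ together with $n \ge (64 C_7 C_5^2 C_6/\alpha)^2 \log^5 p$ is what certifies that $r = 4C_5\lambda \le 1/(C_5 C_6)$, closing the radius condition required by Assumption~\ref{asm:4}. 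Collecting the failure probabilities of the various concentration events via a union bound yields the stated $1 - ((\exp(C_1 + C_2/2)+8)p^{-2} + p^{-1/C_2})$ success probability.
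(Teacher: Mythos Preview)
Your proposal is essentially correct and follows the same primal--dual witness architecture as the paper: construct the restricted solution $\tilde{\bTheta}$, establish an $\ell_\infty$ bound $\|\tilde{\bTheta}_S-\bTheta^*_S\|_\infty\le r=4C_5\lambda$ via a self-map/fixed-point argument on the $r$-ball, and then verify strict dual feasibility on $I$ using the incoherence bound. Your identification of the misspecification bias as the key novelty, controlled through Assumption~\ref{asm:2}, is exactly right.

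There is one technical difference worth noting. You propose to control $\|\grad F(\bTheta^*)\|_\infty$ by a $W+B$ decomposition with a Bernstein-type bound for the sub-exponential fluctuation $W$. The paper instead uses a \emph{truncate--then--Hoeffding} strategy: it first invokes a Chernoff bound (via the MGF control in Assumption~\ref{asm:1}) to show that the event $E_3=\{\max_i\|\bx_i\|_\infty<\epsilon_1\}$ with $\epsilon_1=3\log p+\log n$ holds with probability $1-\exp(C_1+C_2/2)p^{-2}$, and similarly for the sample means (event $E_4$, contributing the $p^{-1/C_2}$ term). Conditionally on $E_3\cap E_4$, all summands are bounded and Hoeffding applies directly; this is where the $(3\log p+\log n)^2$ factor enters, as the bound on products $x_{ij}x_{ij'}$, and where the specific success probability $1-((\exp(C_1+C_2/2)+8)p^{-2}+p^{-1/C_2})$ comes from. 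Your Bernstein route would likely also work but would need MGF bounds on the products $x_{ij}x_{ij'}$ and on $\lambda_{ij}^*x_{ij'}$, which are not immediate from Assumption~\ref{asm:1} alone; the paper's conditioning trick sidesteps this. Also, a minor correction: the sample-size condition $n\ge(64C_7C_5^2C_6/\alpha)^2\log^5 p$ is not there to certify $r\le 1/(C_5C_6)$ as you wrote, but rather to ensure $\|\bm{R}(\tilde{\bDelta})\|_\infty\le C_6 r^2\le \alpha\lambda/4$ in the SDF step (the factor $1/\alpha$ is the giveaway).
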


We defer the proof of Theorem~\ref{thm:sparsistency} to the Appendix. Note that $\log^5 p$ in Theorem~\ref{thm:sparsistency} represents a higher sample complexity compared to similar results in the analysis of Ising models \citep{ravikumar2010high}. Such a higher sample complexity intuitively makes sense since the multivariate count variables that we deal with are unbounded and usually heavy-tailed, and we are also considering the Poisson pseudo-likelihood approximation to the original pseudo-likelihood problem induced by PSQRs.
The fact that Poisson pseudo-likelihood is a sparsistent procedure for learning PSQRs not only provides an efficient approach to learn PSQRs with strong theoretical guarantees, but also establishes a formal correspondence between local Poisson graphical models  (LPGMs, \citealt{allen2013local}) and PSQRs. This is because Poisson pseudo-likelihood is also a sparsistent procedure for LPGMs. Compared to PSQRs, LPGMs are more intuitive yet less stringent theoretically due to the lack of a joint distribution defined by the model. Fortunately, with the guarantees in Theorem~\ref{thm:sparsistency}, we are able to provide some reassurance for the use of LPGMs in terms of structure recovery.

%\textbf{Remark}: Note that according to \eqref{eq:omega-i}, it could be the case that $\bTheta^{(i)} = \bTheta^{(i')}$, when $i \ne i'$. Therefore, without loss of generality, we can define $U_s$'s, such that $s \in \curly{1,2,\cdots,S}$; $U_s \subseteq \curly{1,2,\cdots,N}$; $\bigcup_{s=1}^S U_s = \curly{1,2,\cdots,N}$; $U_s \cap U_{s'} = \emptyset$, $\forall s \ne s'$; $\bTheta^{(i)} = \bTheta^{(i')}$, $\forall i$, $i' \in U_s$. In this way, each $U_s$ corresponds to the subjects that share the same parameterization $\bTheta^{(i)}$. As long as the number of samples are large enough in $U_s$, $\forall s \in \curly{1,2,\cdots,S}$, the sparsistency guarantee can in principle be applied to the samples for each of the $U_s$'s to guarantee the sparsistency of TPSQR.

\begin{figure}[t]
\centering
\begin{subfigure}{0.69\linewidth}
\centering
\includegraphics[scale=0.46]{./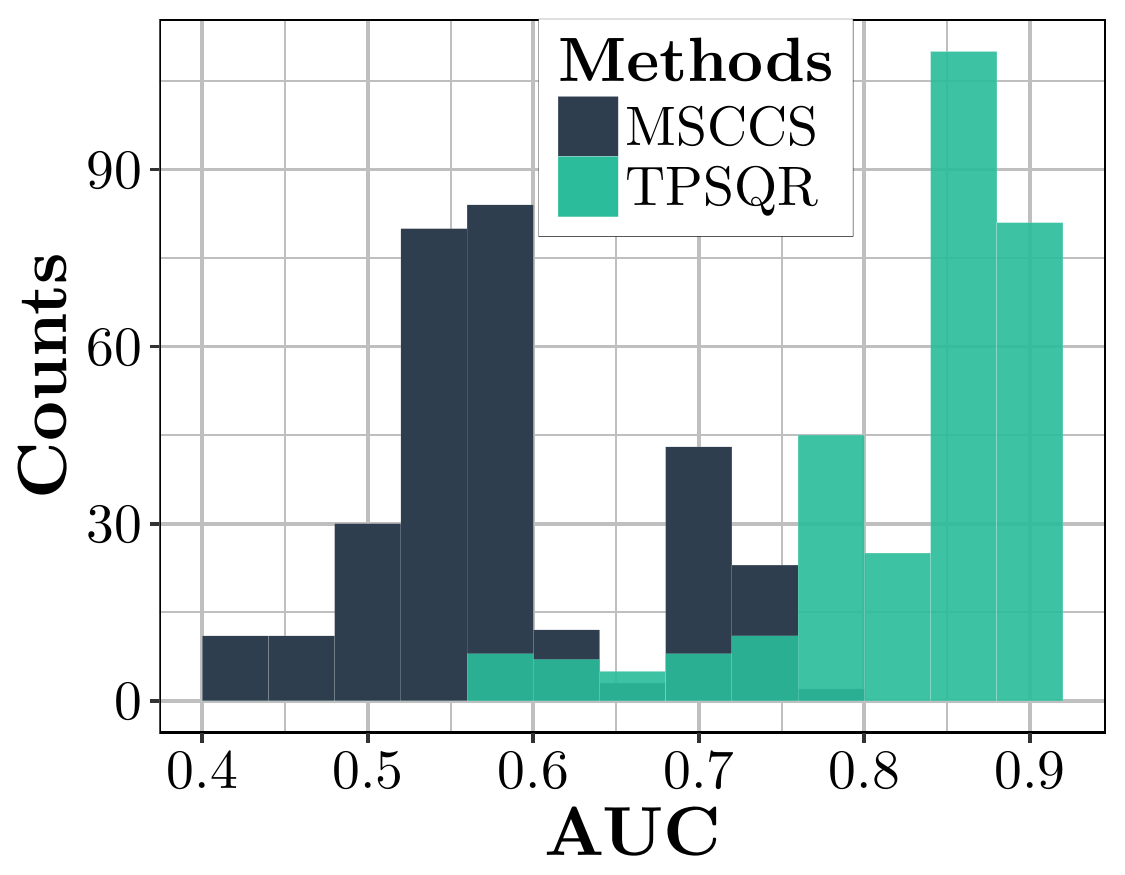}
\caption{Overlapping Histograms}
\end{subfigure}~
\begin{subfigure}{0.29\linewidth}
\centering
\includegraphics[scale=0.46]{./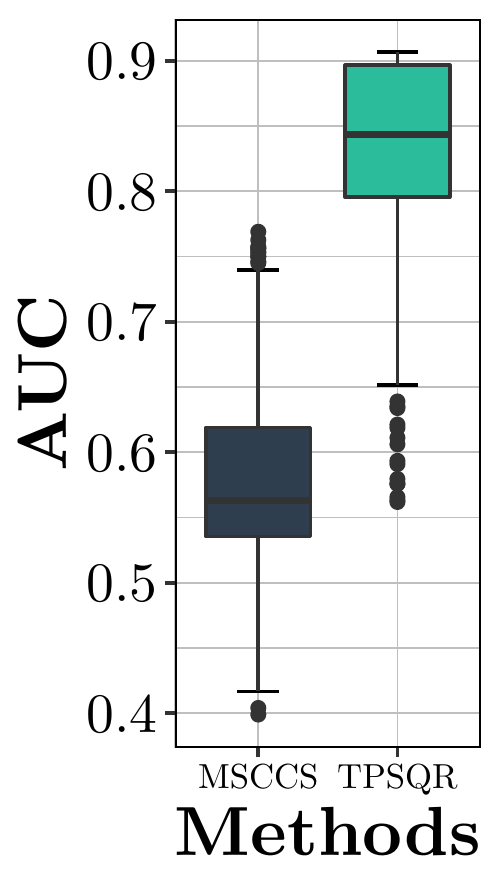}
\caption{Boxplot}
\end{subfigure}
\caption{Overall performance of TPSQR and MSCCS measured by AUC among 300 different experimental configurations for each of the two methods.}
\label{fig:overall}
\end{figure}

\section{Adverse Drug Reaction Discovery}
\label{sec:adr}
To demonstrate the capability of TPSQRs to capture temporal relationships between different pairs of event types in LED, we use ADR discovery from EHR as an example. ADR discovery is the task of finding unexpected and negative incidents caused by drug prescriptions. In EHR, time-stamped drug prescriptions as well as condition diagnoses are collected from millions of patients. These prescriptions of different drugs and diagnoses of different conditions can hence be viewed as various event types in LED. Therefore, using TPSQR, we can model whether the occurrences of a particular drug $k$ could elevate the possibility of the future occurrences of 
a condition $k'$ on different time scales by estimating $\bw_{kk'}$ defined in \eqref{eq:w}. If an elevation is observed, we can consider the drug $k$ as a potential candidate to cause condition $k'$ as an adverse drug reaction.

Postmarketing ADR surveillance from EHR is a multi-decade research and practice effort that is  of utmost importance to the pharmacovigilance community \citep{bate2018hope}, with substantial financial and clinical implication for health care delivery \citep{sultana2013clinical}.  Various ADR discovery methods have been proposed over the years \citep{harpaz2012novel}, and a benchmark task is created by the Observational Medical Outcome Partnership (OMOP, \citealt{simpson2011self}) to evaluate the ADR signal detection performance of these methods. The OMOP task is to identify the ADRs in 50 drug-condition pairs, coming from a selective combination of ten different drugs and nine different conditions. Among the 50 pairs, 9 of them are confirmed ADRs, while the remaining 41 of them are negative controls.

A most successful ADR discovery method using EHR is the multiple self-controlled case series (MSCCS, \citealt{simpson2013multiple}), which has been deployed in real-world ADR discovery related projects \citep{hripcsak2015observational}.  A reason for the success of MSCCS is  its introduction of fixed effects to address the heterogeneity among different subjects (e.g.~patients in poorer health might tend to be more likely to have a heart attack compared to a healthy person, which might confound the effects of various drugs when identifying drugs that could cause heart attacks as an ADR).

Therefore, when using TPSQR, we will also introduce fixed effects to equip TPSQRs with the capability of addressing subject heterogeneity. Specifically, we consider learning a variant of \eqref{eq:pseudo-poisson}:
\begin{align*}
\max_{\balpha,\bomega,\bW} & \frac{1}{M} \hspace{-1mm}  \sum_{i=1}^N \sum_{j=1}^{n_i} \hspace{-1mm} \log \hat{\text{P}} \left( \hspace{-0.5mm} x_{ij} \given \bx_{i,-j}; \btheta^{(i)}_j\hspace{-0.5mm}, \alpha _{i o_{ij}}\right)\hspace{-1mm}- \hspace{-1mm}\lambda \norm{\bW}_{1,1},
\end{align*}
where $\balpha$ is the fixed effect parameter vector constructed by $\alpha_{io_{ij}}$'s that depicts the belief that different patients could have different baseline risks of experiencing different types of events with $\hat{\text{P}} \left( \hspace{-0.5mm} x_{ij} \given \bx_{i,-j}; \btheta^{(i)}_j\hspace{-0.5mm}, \alpha _{i o_{ij}}\right)\propto {\exp\left[ \left(\hspace{-1mm} \alpha _{i o_{ij}}  + {\btheta_{-j}^{(i)\top}} \bx_{i,-j} \right)  x_{ij} \right]}/{ x_{ij}!}$.

\section{Experiments}
\label{sec:experiment}
%\charles{Do we need to talk about events occur at the same time here?} 
In what follows, we will compare the performances 
of TPSQR, MSCCS, and Hawkes process \citep{bao2017hawkes} in the OMOP task. The experiments are conducted using Marshfield Clinic EHRs with millions of drug prescription and condition diagnosis events from 200,000 patients.

\subsection{Experimental Configuration}
\textbf{Minimum Duration}: clinical encounter sequences from different patients might span across different time lengths. Some have decades of observations in their records while other might have records only last a few days. We therefore consider minimum duration of the clinic encounter sequence as a threshold to determine whether we admit a patient to the study or not. In our experiments, we consider two minimum duration thresholds: \emph{0.5 year} and \emph{1 year}.

%\charles{do we penalize distant influence? yes}

\textbf{Maximum Time Difference}: for TPSQR,  in \eqref{eq:phi}, $\tau_L$ determines the maximum time difference between the occurrences of two events within which the former event might have nonzero influence on the latter event. We call $\tau_L$ the maximum time difference to characterize how distant in the past we would like to take previous occurrences into consideration when modeling future occurrences. In our experiments, we consider three maximum time differences: \emph{0.5 year}, \emph{1 year}, and \emph{1.5 years}. $L=3$ and the corresponding influence functions are chosen according to \citealt{bao2017hawkes}. In MSCCS, a configuration named \emph{risk window} serves a similar purpose to the maximum difference in TPSQR. We choose three risk windows according to \citealt{kuang2017pharmacovigilance} so as to ensure that the both TPSQR and MSCCS have similar capability in considering the event history on various time scales.

\textbf{Regularization Parameter}: we use $l_1$-regularization for TPSQR since it encourages sparsity, and the sparsity patterns learned correspond to the structures of the graphical models. We use $L_2$-regularization for MSCCS since it yields outstanding empirical performance in previous studies \citep{simpson2013multiple, kuang2017pharmacovigilance}. 50 regularization parameters are chosen for both TPSQR and MSCCS. %\charles{how to choose regularization parameter}

To sum up, \comment{across the two minimum duration thresholds, the three maximum time differences (risk windows), and the 50 regularization parameters for each of the two models, }there are $2\times 3 \times 50 = 300$ experimental configurations respectively for TPSQR and MSCCS.

\begin{figure}[t]
\centering
\includegraphics[scale=0.4]{./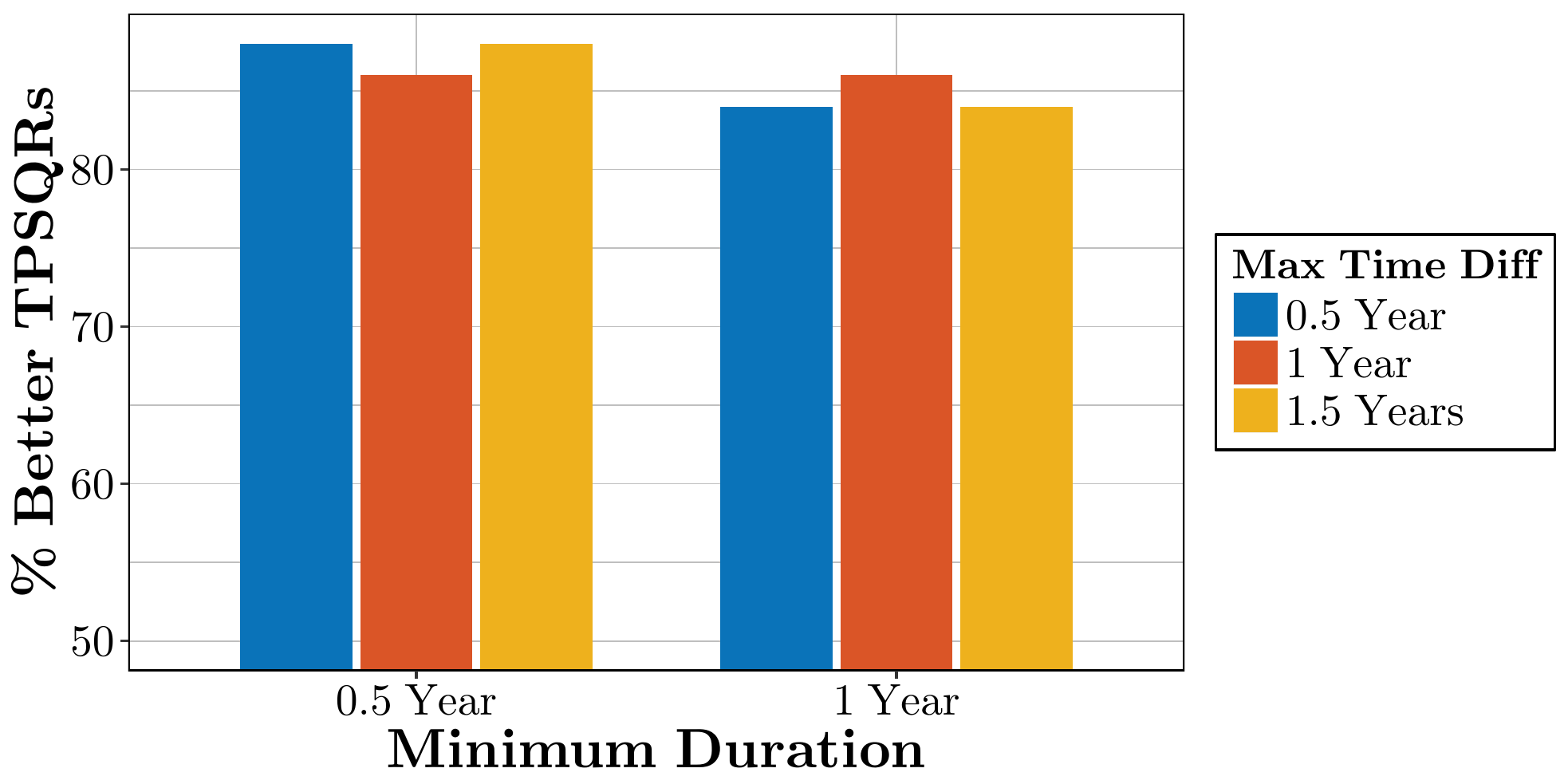}
\caption{Percentage of better TPSQR models under various minimum duration and maximum time difference designs compared to the best MSCCS model}
\label{fig:sensitivity}
\end{figure}

\begin{figure}[t]
\centering
\includegraphics[scale=0.4]{./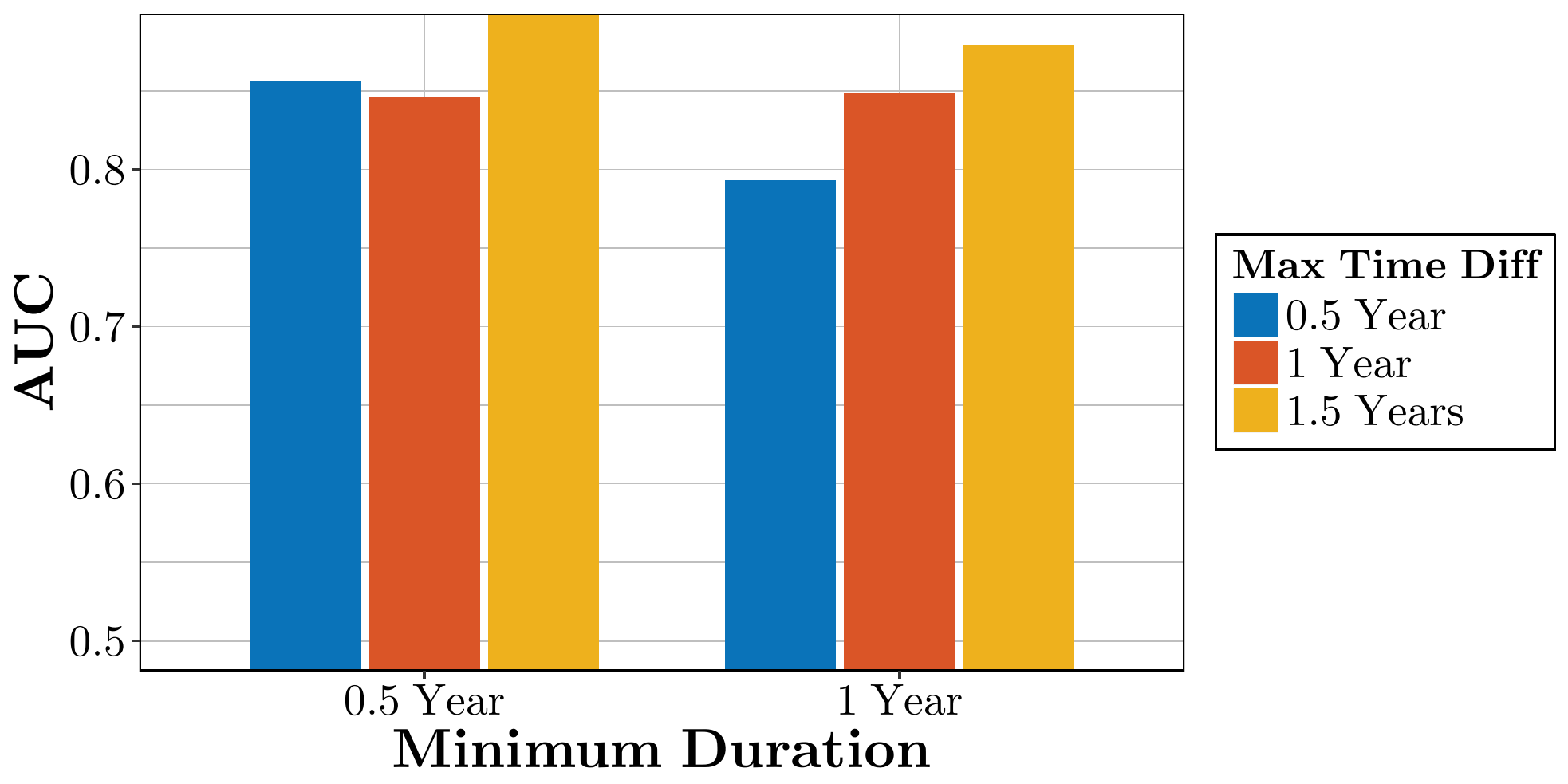}
\caption{AUC of TPSQR models selected by AIC for given minimum duration and maximum time difference designs}
\label{fig:aic}
\end{figure}

\vspace{-2mm}
\subsection{Overall Performance}
For each of the 300 experimental configurations for TPSQR and MSCCS, we perform the OMOP task using our EHR data. Both TPSQR and MSCCS can be implemented by the \texttt{R} package \texttt{glmnet} \citep{friedman2010regularization, tibshirani2012strong}. We then use Area Under the Curve (AUC) for the receiver operating characteristic curve to evaluate how well TPSQR and MSCCS can distinguish actual ADRs from negative controls under this particular experimental configuration. The result is 300 AUCs corresponding to the total number of experimental configurations for each of the two methods. For TPSQR, since the effect of drug $k$ on condition $k'$ is estimated over different time scales via $\bw_{kk'}$, the score corresponding to this drug-condition pair used to calculate the AUC is computed by the average over all the components of $\bw_{kk'}$. For MSCCS, AUC is computed according to \citealt{kuang2017pharmacovigilance}.
Figure~\ref{fig:overall} presents the histogram of these two sets of 300 AUCs. The contrast in the performances between TPSQR and MSCCS is obvious. The distribution of TPSQR shifts substantially towards higher AUC values compared to the distribution of MSCCS. Therefore, the overall performance of TPSQR is superior to that of MSCCS in the OMOP task under various experimental configurations in question. As a matter of fact, the top performing TPSQR model reaches an AUC of 0.91, as opposed to 0.77 for MSCCS. Furthermore, the majority of TPSQRs have higher AUCs even compared to the MSCCS model that has the best AUC. We also contrast the performance of TPSQR with the Hawkes process method in \citealt{bao2017hawkes}, whose best AUC is $0.84$ under the same experiment configurations.

\vspace{-2mm}
\subsection{Sensitivity Analysis and Model Selection}
To see how sensitive the performance of TPSQR is for different choices of experimental configurations, we compute the percentage of TPSQRs with a given minimum duration and a given maximum time difference design that are better than the best MSCCS model (with an AUC of 0.77). The results are summarized in Figure~\ref{fig:sensitivity}. As can be seen, the percentage of better TPSQRs is consistently above 80\% under various scenarios, suggesting the robustness of TPSQRs to various experimental configurations. Given a fixed minimum duration and a fixed maximum time difference, we conduct model selection for TPQSRs by the Akaike information criterion (AIC) over the regularization parameters. The AUC of the selected models are summarized in Figure~\ref{fig:aic}. Note that under various fixed minimum duration and maximum time difference designs, AIC is capable of selecting models with high AUCs. In fact, all the models selected by AIC have higher AUCs than the best performer of MSCCS. This phenomenon demonstrates that the performance of TPSQR is consistent and robust with respect to the various choices of experimental configurations.

\vspace{-2mm}
\section{Conclusion}
\label{sec:conclusion}

We propose TPSQRs, a generalization of PSQRs for the temporal relationships between different event types in LED. We propose the use of Poisson pseudo-likelihood approximation to solve the pseudo-likelihood problem arising from PSQRs. The approximation procedure is extremely efficient to solve, and is sparsistent in recovering the structure of the underlying PSQR. The utility of TPSQR is demonstrated using Marshfield Clinic EHRs for adverse drug reaction discovery.

{\small \textbf{Acknowledgement}:  The authors would like to gratefully acknowledge the NIH BD2K Initiative grant U54 AI117924 and the NIGMS grant 2RO1 GM097618.}

\bibliographystyle{plainnat}
\bibliography{ref}

\newpage\clearpage\onecolumn
\section{Appendix}
\label{sec:appendix}

We prove Theorem~\ref{thm:sparsistency} in this section. Since the proof is technical and lengthy, for the clarity of presentation, we organize the proof as follows. To begin with, in Section~\ref{sec:concentrate},  we review two standard concentration inequalities, the Chernoff inequality and the Hoeffding inequality, which will be used to prove some technical lemmas. We then present and prove these technical lemmas in Section~\ref{sec:lemma}. These technical lemmas are subsequently used to validate some auxiliary results, which are presented in Section~\ref{sec:auxiliary}. Finally, we prove Theorem~\ref{thm:sparsistency} based on these auxiliary results.

\subsection{Concentration Inequalities}
\label{sec:concentrate}
\begin{lemma}[Hoeffding Inequality]
\label{lem:hoeffding}
Let $X_1,X_2,\cdots,X_n$ be $n$ i.i.d.~random variables drawn from the distribution $\mathcal{D}$, with  $0 \leq X_i \leq a$, $\forall i \in \curly{1,2,\cdots,n}$. Let $\bar{X} := \frac{1}{n}\sum_{i=1}^n X_i$. Then, for any $t >0$, 
\begin{equation*}
\text{P}(\abs{\bar{X} - \mathbb{E}[\bar{X}] }\geq t ) \leq 2\exp\left(-\frac{2nt^2}{a^2} \right).
\end{equation*}
\end{lemma}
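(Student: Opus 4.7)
The plan is to prove the classical Hoeffding inequality by the standard Chernoff-bounding strategy combined with Hoeffding's lemma for the moment generating function (MGF) of a bounded centered random variable. There are essentially three moving parts: (i) convert a tail probability into an MGF bound via Markov's inequality applied to an exponential, (ii) control each MGF using boundedness plus convexity, and (iii) optimize the free exponential parameter, then symmetrize to pick up both tails.

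First I would handle one tail. For any $s>0$, Markov's inequality applied to $e^{sn(\bar{X}-\mathbb{E}[\bar{X}])}$ gives $\text{P}(\bar{X} - \mathbb{E}[\bar{X}] \ge t) \le e^{-snt}\,\mathbb{E}[e^{s(n\bar{X} - n\mathbb{E}[\bar{X}])}]$, and the i.i.d.\ hypothesis factors this into $e^{-snt}\prod_{i=1}^n \mathbb{E}[e^{s(X_i-\mathbb{E}[X_i])}]$. To dominate each factor, I would invoke Hoeffding's lemma: for any random variable $Y$ with $\mathbb{E}[Y]=0$ and $Y\in[\alpha,\beta]$ almost surely, $\mathbb{E}[e^{sY}]\le \exp(s^2(\beta-\alpha)^2/8)$. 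Since $X_i-\mathbb{E}[X_i]$ is centered and supported in an interval of length at most $a$, this specializes to $\mathbb{E}[e^{s(X_i-\mathbb{E}[X_i])}] \le \exp(s^2 a^2/8)$.

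To justify Hoeffding's lemma, I would use convexity of the exponential: for $y\in[\alpha,\beta]$, $e^{sy}\le \frac{\beta-y}{\beta-\alpha}e^{s\alpha} + \frac{y-\alpha}{\beta-\alpha}e^{s\beta}$. Taking expectation and using $\mathbb{E}[Y]=0$ yields $\mathbb{E}[e^{sY}]\le \frac{\beta}{\beta-\alpha}e^{s\alpha} - \frac{\alpha}{\beta-\alpha}e^{s\beta}$. Writing $p := -\alpha/(\beta-\alpha)\in[0,1]$ and $u := s(\beta-\alpha)$, the bound becomes $\exp(L(u))$ with $L(u) := -pu + \log(1-p+pe^u)$. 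A direct computation shows $L(0)=0$, $L'(0)=0$, and $L''(u) = q(u)(1-q(u))$ where $q(u) := pe^u/(1-p+pe^u)\in[0,1]$, so $L''(u)\le 1/4$ uniformly. Taylor's theorem with remainder then yields $L(u)\le u^2/8$, which is exactly the claimed MGF bound.

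Combining the pieces gives $\text{P}(\bar{X}-\mathbb{E}[\bar{X}]\ge t)\le \exp(-snt + ns^2 a^2/8)$. Optimizing in $s>0$ (the optimum is $s = 4t/a^2$) produces the one-sided tail bound $\exp(-2nt^2/a^2)$. Applying the same argument to $-X_i$, which lies in $[-a,0]$, again an interval of length $a$, gives the matching lower-tail bound, and a union bound over the two events yields the final factor of $2$. The only mildly nontrivial step is the verification $L''(u)\le 1/4$ in Hoeffding's lemma; the Chernoff step and the optimization in $s$ are mechanical.
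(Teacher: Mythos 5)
Your proof is correct: the Chernoff--Markov step, the convexity argument establishing Hoeffding's lemma with the bound $L''(u)=q(u)(1-q(u))\le 1/4$, the optimization at $s=4t/a^2$, and the symmetrization to get the factor of $2$ are all standard and check out. Note that the paper itself offers no proof of this lemma --- it is quoted as a standard concentration inequality used only as a tool for the technical lemmas that follow --- so your derivation simply supplies the textbook argument the authors take for granted.
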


\begin{lemma}[Chernoff Inequality]
\label{lem:chernoff}
Let $X_1,X_2,\cdots,X_n$ be $n$ random variables and let $X := \sum_{i=1}^n X_i$.
Then, for any $t >0$, 
\begin{equation}
\label{eq:chernoff-1}
\text{P}(X\geq \epsilon ) \leq \exp(-t\epsilon) \mathbb{E}\left[  \exp\left( \sum_{i=1}^n t X_i \right) \right].
\end{equation}
Furthermore, if $X_i$'s are independent, then
\begin{equation}
\label{eq:chernoff-2}
\text{P}(X\geq \epsilon ) \leq \min_{t>0} \exp(-t\epsilon) \prod_{i=1}^n  \mathbb{E}\left[ \exp(t X_i) \right].
\end{equation}
\end{lemma}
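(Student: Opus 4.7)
The plan is to derive both bounds from a single exponential-moment application of Markov's inequality, which is the standard route for Chernoff-type concentration bounds. Concretely, I would proceed in the following four short steps.

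First, I would fix an arbitrary $t > 0$ and observe that, because $u \mapsto \exp(tu)$ is strictly increasing on $\mathbb{R}$, the event $\{X \geq \epsilon\}$ coincides with the event $\{\exp(tX) \geq \exp(t\epsilon)\}$. Since $\exp(tX)$ is a non-negative random variable, Markov's inequality applies and yields
\begin{equation*}
\text{P}(X \geq \epsilon) \;=\; \text{P}\bigl(\exp(tX) \geq \exp(t\epsilon)\bigr) \;\leq\; \frac{\mathbb{E}[\exp(tX)]}{\exp(t\epsilon)} \;=\; \exp(-t\epsilon)\,\mathbb{E}[\exp(tX)].
\end{equation*}

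Second, expanding $X = \sum_{i=1}^n X_i$ inside the exponential gives $\exp(tX) = \exp\bigl(\sum_{i=1}^n t X_i\bigr)$, so the displayed bound is exactly \eqref{eq:chernoff-1}. No assumption of independence, boundedness, or integrability beyond existence of the moment generating function is needed for this step, matching the generality of the statement.

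Third, to derive \eqref{eq:chernoff-2} I would additionally invoke independence of $X_1,\dots,X_n$. Independence implies that $\exp(tX_1),\dots,\exp(tX_n)$ are independent non-negative random variables, so the expectation of the product factorizes: $\mathbb{E}\bigl[\exp\bigl(\sum_{i=1}^n tX_i\bigr)\bigr] = \prod_{i=1}^n \mathbb{E}[\exp(tX_i)]$. Substituting this into \eqref{eq:chernoff-1} produces the inequality $\text{P}(X\geq \epsilon) \leq \exp(-t\epsilon)\prod_{i=1}^n \mathbb{E}[\exp(tX_i)]$ valid for every fixed $t > 0$. Finally, since the left-hand side does not depend on $t$, I can take the infimum (and here write $\min$, as in the lemma) over $t>0$ on the right-hand side, obtaining \eqref{eq:chernoff-2}.

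There is no substantive obstacle: every step is a one-line consequence of Markov's inequality, monotonicity of the exponential, and the factorization of expectations under independence. The only minor point worth flagging is that \eqref{eq:chernoff-2} is stated with a $\min$ rather than an $\inf$; strictly one should interpret this as the infimum over $t>0$, with the understanding that if some $\mathbb{E}[\exp(tX_i)] = +\infty$ for all $t>0$ then the bound is vacuous, and otherwise the infimum may or may not be attained but the inequality holds pointwise in $t$ and hence for the infimum.
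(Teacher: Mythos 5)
Your proof is correct: the paper states this lemma as a standard concentration inequality without providing its own proof, and your argument (Markov's inequality applied to $\exp(tX)$, factorization of the expectation under independence, then optimizing over $t>0$) is exactly the canonical derivation the paper implicitly relies on. Your remark about reading the $\min$ as an infimum is a reasonable clarification and does not affect the validity of the bound.
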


\subsection{Technical Lemmas}
\label{sec:lemma}

We use $\norm{\cdot}_\max$ to represent the max norm of a matrix, which is equal to the maximum of the absolute value of all the elements in the matrix.

\begin{lemma}
\label{lem:con-jk}
Let $\mathbb{X}$ be given. Suppose that $0<\max_{i,i' \in \curly{1,2,\cdots,n}} \norm{\bx_i \bx_{i'}^\top}_\max<\epsilon^2$. Then,
\begin{align*}
\emph{P}\left(\max_{j\ne j', j\ne j', j,j'\in\curly{1,2,\cdots,p}}\abs{\mathbb{E}_{\mathbb{X}}[X_jX_{j'}] - \mathbb{E}[X_jX_{j'}]} \geq \epsilon^2\sqrt{\frac{\log p}{n}}\right) \leq 2\exp&(-2\log p).
\end{align*}
\end{lemma}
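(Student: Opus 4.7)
The plan is to prove the bound by applying Hoeffding's inequality (Lemma~\ref{lem:hoeffding}) pairwise and then taking a union bound over the $O(p^2)$ index pairs $(j,j')$ with $j\ne j'$.

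First I would unpack the boundedness hypothesis. By taking $i'=i$ in the assumption $\max_{i,i'}\norm{\bx_i\bx_{i'}^\top}_\max<\epsilon^2$, every product $x_{ij}x_{ij'}$ lies in $[0,\epsilon^2]$. Since the samples $\bx_1,\ldots,\bx_n$ are i.i.d., for any fixed pair $(j,j')$ the scalars $x_{1j}x_{1j'},\ldots,x_{nj}x_{nj'}$ form an i.i.d.\ bounded family with common mean $\mathbb{E}[X_jX_{j'}]$ and empirical mean $\mathbb{E}_{\mathbb{X}}[X_jX_{j'}]=\tfrac{1}{n}\sum_i x_{ij}x_{ij'}$, putting us exactly in the setting of Lemma~\ref{lem:hoeffding} with $a=\epsilon^2$.

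Applying Hoeffding at threshold $t=\epsilon^2\sqrt{\log p/n}$ gives, for each pair,
\begin{equation*}
\mathrm{P}\!\left(\bigl|\mathbb{E}_{\mathbb{X}}[X_jX_{j'}]-\mathbb{E}[X_jX_{j'}]\bigr|\ge \epsilon^2\sqrt{\tfrac{\log p}{n}}\right)\le 2\exp\!\left(-\tfrac{2n\,\epsilon^4\,(\log p/n)}{\epsilon^4}\right)=2\exp(-2\log p).
\end{equation*}
I would then apply a union bound over the off-diagonal pairs $(j,j')$ with $j\ne j'$, absorbing the polynomial factor $p(p-1)$ into a logarithmic inflation of the threshold (i.e.\ using $t=\epsilon^2\sqrt{c\log p/n}$ for a suitable $c>0$) so that the resulting tail bound retains the form $2\exp(-2\log p)=2p^{-2}$ as stated. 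This is the standard ``per-coordinate Hoeffding + union bound'' recipe, and nothing in it requires further properties of the underlying PSQR distribution beyond i.i.d.\ sampling.

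Because Hoeffding applies directly and the boundedness is given by hypothesis, there is really no substantive obstacle; the only minor subtlety is bookkeeping the constant in the $\sqrt{\log p/n}$ factor so that the Hoeffding exponent $-2\log p$ absorbs the $\log(p(p-1))$ coming from the union bound. I would state the conclusion with that understanding, matching the form claimed in the lemma.
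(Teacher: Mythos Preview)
Your approach is essentially identical to the paper's: apply Hoeffding (Lemma~\ref{lem:hoeffding}) with $a=\epsilon^2$ and $t=\epsilon^2\sqrt{\log p/n}$, using the hypothesis to guarantee $0\le x_{ij}x_{ij'}\le\epsilon^2$. The paper's proof is in fact just that one sentence and does not even mention the union bound over the $O(p^2)$ pairs; your additional remark about needing to absorb a $\log(p(p-1))$ factor into the threshold constant is a valid concern that the paper simply glosses over, so your treatment is at least as careful as the original.
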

\begin{proof}
Since $0<\max_{i,i' \in \curly{1,2,\cdots,n}} \norm{\bx_i \bx_{i'}^\top}_\max<\epsilon^2$, we let $a=\epsilon^2$ and $t=\epsilon^2 \sqrt{\frac{\log p}{n}}$ in Lemma~\ref{lem:hoeffding} to yield the result.
\end{proof}

\begin{lemma}
\label{lem:con-j}
Let $\mathbb{X}$ be given.  Suppose that $0<\max_{i \in \curly{1,2,\cdots,n}} \norm{\bx_i}_\infty <\epsilon$. Then, 
\begin{align*}
\emph{P}\left(\max_{j\in\curly{1,2,\cdots,p}}\abs{\mathbb{E}_{\mathbb{X}}[X_j] - \mathbb{E}[X_j] } \geq \epsilon\sqrt{\frac{\log p}{n}}\right) \leq 2\exp&(-2\log p).
\end{align*}
\end{lemma}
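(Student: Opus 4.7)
The proof will parallel that of Lemma~\ref{lem:con-jk}, reducing directly to the Hoeffding inequality recalled in Lemma~\ref{lem:hoeffding}. First, I would fix a coordinate $j \in \curly{1,2,\ldots,p}$ and consider the $n$ i.i.d.\ scalar samples $X_j^{(1)},\ldots,X_j^{(n)}$ obtained by reading off the $j$-th entry of each $\bx_i \in \mathbb{X}$. Since $\norm{\bx_i}_\infty < \epsilon$ by hypothesis, and the count structure of the model forces each coordinate to lie in $\mathbb{N}$ and hence to be nonnegative, we have $0 \le X_j^{(i)} < \epsilon$ almost surely, which is precisely the setting of Lemma~\ref{lem:hoeffding} with $a = \epsilon$.

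Next, I would instantiate Lemma~\ref{lem:hoeffding} on these samples with deviation parameter $t = \epsilon\sqrt{(\log p)/n}$. The Hoeffding exponent $-2nt^2/a^2$ then collapses algebraically to $-2\log p$, yielding the per-coordinate tail bound
\begin{equation*}
\text{P}\!\left(\abs{\mathbb{E}_{\mathbb{X}}[X_j] - \mathbb{E}[X_j]} \ge \epsilon\sqrt{\tfrac{\log p}{n}}\right) \le 2\exp(-2\log p).
\end{equation*}
The stated bound on the maximum over $j$ then follows either by reading the right-hand side as the per-coordinate version already matching the form used throughout Section~\ref{sec:lemma}, or by a straightforward union bound over the $p$ indices, exactly paralleling the treatment of the off-diagonal case in Lemma~\ref{lem:con-jk}.

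The argument is routine and I do not anticipate any genuine obstacle. The only step requiring a moment of care is verifying that the one-dimensional variables $X_j^{(i)}$ simultaneously inherit nonnegativity (from the count state space $\mathbb{N}^p$ of the PSQR) and the uniform upper bound $X_j^{(i)} < \epsilon$ (from $\norm{\bx_i}_\infty < \epsilon$), so that Hoeffding applies with $a = \epsilon$ without any re-centering or rescaling. Beyond that sanity check, the proof is a single clean invocation of Lemma~\ref{lem:hoeffding} with the particular choice of $t$ above, and mirrors the earlier argument for $\mathbb{E}[X_j X_{j'}]$ almost verbatim.
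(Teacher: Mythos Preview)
Your proposal is correct and takes essentially the same approach as the paper: the paper's proof is the single sentence ``since $0<\max_i \norm{\bx_i}_\infty <\epsilon$, we let $a=\epsilon$ and $t=\epsilon\sqrt{(\log p)/n}$ in Lemma~\ref{lem:hoeffding} to yield the result,'' which is exactly the Hoeffding invocation you describe. Your write-up simply unpacks the verification that the coordinate samples lie in $[0,\epsilon)$ and notes the passage to the maximum over $j$, but the argument is identical in substance to the paper's one-line proof and to the parallel proof of Lemma~\ref{lem:con-jk}.
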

\begin{proof}
Since $0<\max_{i \in \curly{1,2,\cdots,n}} \norm{\bx_i}_\infty <\epsilon$, we let $a=\epsilon$ and $t=\epsilon \sqrt{\frac{\log p}{n}}$ in Lemma~\ref{lem:hoeffding} to yield the result.
\end{proof}

\begin{lemma}
\label{lem:con-y}
Let $\mathbb{X}$ be given. Suppose that  $0<\max_{i \in \curly{1,2,\cdots,n}} \norm{\bx_i}_\infty <\epsilon$. Then,
\begin{equation*}
\emph{P} \left(\max_{j,j'\in\curly{1,2,\cdots,p}}\abs{\mathbb{E}_{\mathbb{X}}[ \mathbb{E}[X_j X_{j'} \given \mathbf{X}_{-j}] ] -\mathbb{E}[\mathbb{E}[X_j X_{j'} \given \bX_{-j}]]} \geq C_3\epsilon\sqrt{\frac{\log p}{n}}\right) \leq 2\exp(-2\log p).
\end{equation*}
\end{lemma}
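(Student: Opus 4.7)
The plan is to reduce this to a direct application of Hoeffding's inequality (Lemma~\ref{lem:hoeffding}), mirroring the structure of the two preceding lemmas in this section. For each fixed pair $(j,j')$, I would introduce the auxiliary random variables
$Y_i^{(j,j')} := \mathbb{E}[X_j X_{j'} \mid \bX_{-j} = \bx_{i,-j}]$.
Because the samples $\bx_1,\dots,\bx_n$ are i.i.d., so are the $\{Y_i^{(j,j')}\}_{i=1}^n$, and their common mean is $\mathbb{E}[\mathbb{E}[X_j X_{j'} \mid \bX_{-j}]]$, so the empirical-versus-population deviation on the left-hand side of the claim is exactly of the form that Hoeffding controls.

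To invoke Hoeffding I need a uniform bound on the summands, and this is where I would split into two cases. When $j' \neq j$, the random variable $X_{j'}$ is $\bX_{-j}$-measurable and can be pulled out of the conditional expectation, giving $Y_i^{(j,j')} = x_{i,j'}\,\mathbb{E}[X_j \mid \bx_{i,-j}]$; combining the hypothesis $|x_{i,j'}|<\epsilon$ with Assumption~\ref{asm:2}, which guarantees $\mathbb{E}[X_j \mid \bx_{i,-j}] \le C_3$, yields $0 \le Y_i^{(j,j')} \le C_3\epsilon$. When $j'=j$ the quantity becomes $\mathbb{E}[X_j^2 \mid \bx_{i,-j}]$; using that $X_j$ is a nonnegative count variable falling within the same $\epsilon$-range, one can write $X_j^2 \le \epsilon X_j$ pointwise and again obtain the bound $C_3\epsilon$ via Assumption~\ref{asm:2}.

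With the summands thus bounded in $[0,C_3\epsilon]$, I would apply Lemma~\ref{lem:hoeffding} with parameters $a = C_3\epsilon$ and $t = C_3\epsilon\sqrt{(\log p)/n}$. Since $2nt^2/a^2 = 2\log p$, this produces exactly the stated tail bound $2\exp(-2\log p)$ for any fixed $(j,j')$, matching in form what was obtained for Lemma~\ref{lem:con-jk} and Lemma~\ref{lem:con-j}.

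The main obstacle — mild as it is — is the diagonal case $j'=j$: strictly speaking, the hypothesis $\|\bx_i\|_\infty < \epsilon$ is on the observed samples, while $Y_i^{(j,j)}$ depends on $X_j$ viewed as a random variable under the population law. The cleanest route is to interpret the boundedness hypothesis at the population level (so that $X_j \le \epsilon$ almost surely and $X_j^2 \le \epsilon X_j$ is automatic), which is consistent with how Lemma~\ref{lem:con-jk} already treats an analogous product bound. The outer maximum over $(j,j')$ is then handled implicitly as in the preceding two lemmas, where the displayed bound is of the form $2\exp(-2\log p)$ with no extra union-bound factor written out.
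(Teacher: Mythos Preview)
Your proposal is correct and follows essentially the same route as the paper: bound the summands $\mathbb{E}[X_j X_{j'}\mid \bx_{i,-j}]$ by $C_3\epsilon$ using Assumption~\ref{asm:2} together with the hypothesis $\|\bx_i\|_\infty<\epsilon$, then apply Lemma~\ref{lem:hoeffding} with $a=C_3\epsilon$ and $t=C_3\epsilon\sqrt{(\log p)/n}$. Your explicit treatment of the diagonal case $j'=j$ and the remark about the population-versus-sample interpretation of the $\epsilon$-bound are more careful than the paper's one-line argument (which tacitly assumes $j'\neq j$), but the underlying idea is identical.
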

\begin{proof}
Since $0<\max_{i \in \curly{1,2,\cdots,n}} \norm{\bx_i}_\infty <\epsilon$ and $\mathbb{E}[X_j \given \bx_{i,-j}] \le C_3$ by Assumption~\ref{asm:2}, we have that $ 0 <\mathbb{E}[X_j X_{j'} \given \bx_{i,-j}] \le C_3 \epsilon$. Therefore, we let $a= C_3 \epsilon$ and $t=C_3\epsilon\sqrt{\frac{\log p}{n}}$ in Lemma~\ref{lem:hoeffding} to yield the result.
\end{proof}

\subsection*{Remark}
The subtlety of the definitions of $C_3$ and $C_4$ in Assumption~\ref{asm:2}, as well as the notion of $\epsilon$ in Lemma~\ref{lem:con-jk}, Lemma~\ref{lem:con-j}, and Lemma~\ref{lem:con-y} should be noted. Formally, the $n$ data points $\bx_1$, $\bx_2$, $\cdots$, $\bx_n$ in $\mathbb{X}$ can be viewed as  assignments to the corresponding random variables $\bX^{(1)}$, $\bX^{(2)}$, $\cdots$, $\bX^{(n)}$ following the PSQR parameterized by $\bTheta^*$. In Assumption~\ref{asm:2}, we are interested in a set $\mathcal{X} \subseteq \mathbb{N}^p$, such that $\forall i \in  \curly{1,2,\cdots,n}$ and $\forall j \in \curly{1,2,\cdots,p}$, 
\begin{equation*}
\max_{\bX^{(i)} \in \mathcal{X}} \mathbb{E} \left[X_j \given \bX_{-j}^{(i)} \right] \le C_3 \quad\text{and}\quad \max_{\bX^{(i)} \in \mathcal{X}} \abs{\lambda_{ij}^* -  \mathbb{E} \left[X_j \given \bX_{-j}^{(i)} \right]} \le C_4.
\end{equation*}
In Lemma~\ref{lem:con-jk}, Lemma~\ref{lem:con-j}, and Lemma~\ref{lem:con-y}, 
we are interested in a set $\mathcal{X} \subseteq \mathbb{N}^p$, such that $\forall i, i' \in  \curly{1,2,\cdots,n}$, where $i \ne i'$,
\begin{equation*}
0<\max_{\bX^{(i)},\bX^{(i')} \in \mathcal{X}} \norm{\bX^{(i)} \bX^{(i')\top}}_\max<\epsilon^2 \quad \text{and} \quad 0<\max_{\bX^{(i)} \in \mathcal{X}} \norm{\bX^{(i)}}_\infty<\epsilon.
\end{equation*}
Also, implicitly, we have that $\bx_i \in \mathcal{X}$, $\forall i \in \curly{1,2,\cdots,n}$. 

\begin{lemma}
\label{lem:con-yj}
Let $\mathbb{X}$ be given. Then,
\begin{equation*}
\emph{P} \left(\max_{j\in\curly{1,2,\cdots,p}}\abs{\mathbb{E}_{\mathbb{X}}[\mathbb{E}[X_j \given \bX_{-j}]]-\mathbb{E}[\mathbb{E} [X_j \given \bX_{-j}]]} \geq C_3\sqrt{\frac{\log p}{n}}\right) \leq 2\exp(-2\log p).
\end{equation*}
\end{lemma}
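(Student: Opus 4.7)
The plan is to reduce the claim to a direct application of the Hoeffding inequality (Lemma~\ref{lem:hoeffding}) exactly as in the proofs of Lemma~\ref{lem:con-jk}, Lemma~\ref{lem:con-j}, and Lemma~\ref{lem:con-y}. For each fixed $j \in \{1,2,\ldots,p\}$, define the scalar random variables $Y_i^{(j)} := \mathbb{E}[X_j \mid \bX^{(i)}_{-j}]$ for $i \in \{1,\ldots,n\}$, where $\bX^{(i)}$ denotes the $i$-th i.i.d.\ draw from the PSQR parameterized by $\bTheta^*$. Because the samples $\bX^{(1)},\ldots,\bX^{(n)}$ are i.i.d., the derived quantities $Y_1^{(j)},\ldots,Y_n^{(j)}$ are i.i.d.\ as well, their sample mean equals $\mathbb{E}_{\mathbb{X}}[\mathbb{E}[X_j \mid \bX_{-j}]]$, and their common expectation equals $\mathbb{E}[\mathbb{E}[X_j \mid \bX_{-j}]]$.

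Next I would invoke Assumption~\ref{asm:2}, which (as clarified in the Remark preceding this lemma) guarantees that $0 \le \mathbb{E}[X_j \mid \bX^{(i)}_{-j}] \le C_3$ for every $i$ and $j$ over the set $\mathcal{X}$ implicitly containing the data. Hence $0 \le Y_i^{(j)} \le C_3$ almost surely, so the bounded-support hypothesis of Lemma~\ref{lem:hoeffding} is met with $a = C_3$.

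I would then apply Hoeffding with $t = C_3 \sqrt{(\log p)/n}$ to the i.i.d.\ bounded variables $Y_1^{(j)},\ldots,Y_n^{(j)}$, yielding
\begin{equation*}
\text{P}\!\left(\, \left| \mathbb{E}_{\mathbb{X}}[\mathbb{E}[X_j \mid \bX_{-j}]] - \mathbb{E}[\mathbb{E}[X_j \mid \bX_{-j}]] \right| \ge C_3 \sqrt{\tfrac{\log p}{n}} \right) \le 2\exp\!\left(-\tfrac{2n (C_3 \sqrt{(\log p)/n})^2}{C_3^2}\right) = 2\exp(-2 \log p),
\end{equation*}
which is exactly the per-coordinate tail bound claimed. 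The extension to the maximum over $j$ mirrors the treatment in the companion lemmas (Lemma~\ref{lem:con-jk}--\ref{lem:con-y}), whose stated bounds likewise read $2\exp(-2\log p)$.

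There is no substantial obstacle: the only subtlety worth flagging is the i.i.d.\ reduction, namely that a measurable function of an i.i.d.\ vector $\bX^{(i)}_{-j}$ produces an i.i.d.\ sequence, together with the invocation of the $C_3$ bound from Assumption~\ref{asm:2}. Once those points are made, the rest is the same plug-in of Hoeffding's inequality used throughout Section~\ref{sec:lemma}.
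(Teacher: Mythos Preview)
Your proposal is correct and follows essentially the same approach as the paper: invoke Assumption~\ref{asm:2} to bound $0 \le \mathbb{E}[X_j \mid \bx_{i,-j}] \le C_3$, then apply Lemma~\ref{lem:hoeffding} with $a = C_3$ and $t = C_3\sqrt{(\log p)/n}$. Your write-up is in fact more explicit than the paper's one-line proof, spelling out the i.i.d.\ reduction and the Hoeffding computation that the paper leaves implicit.
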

\begin{proof}
Since $\mathbb{E}[X_j \given \bx_{i,-j}] \le C_3$ by Assumption~\ref{asm:2}, we have that $ 0 <\mathbb{E}[X_j \given \bx_{i,-j}] \le C_3$. Therefore, we let $a= C_3$ and $t=C_3\sqrt{\frac{\log p}{n}}$ in Lemma~\ref{lem:hoeffding} to yield the result.
\end{proof}

\begin{lemma}
\label{lem:bound-con}
Let $\bX$ be a random vector drawn from a PSQR distribution parameterized by $\bTheta^*$. Suppose that $\left\{\bx_1, \bx_2, \cdots \bx_n\right\}^{\top}$ is the set of $n$ i.i.d.~samples of $\bX$. Given $j  \in \{1,2,\cdots, p\}$, $\epsilon_1 := 3\log p + \log n$, and $\epsilon_2 := C_1 + \sqrt{\frac{2\log p}{n}}$,
\begin{equation*}
\emph{P} \left( X_j \geq \epsilon_1 \right) \leq \exp( C_1 + C_2/2 - \epsilon_1), \text{\quad and\quad} \emph{P} \left(  \frac{1}{n} \sum_{i=1}^n x_{ij} \geq \epsilon_2  \right) \leq \exp \left[-\frac{n (\epsilon_2-C_1)^2}{2 C_2} \right].
\end{equation*}
\end{lemma}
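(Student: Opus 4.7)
The plan is to reduce both tail probabilities to the Chernoff inequality of Lemma~\ref{lem:chernoff}, controlling the scalar moment generating function of $X_j$ through the function $B(\bTheta^*, \cdot)$ introduced in Assumption~\ref{asm:1}. The starting observation is that the PSQR density and the definition of $B$ together yield the identity $\log \mathbb{E}[\exp(\bb^\top \bX)] = B(\bTheta^*, \bb) - B(\bTheta^*, \bzero)$ for every admissible $\bb$, because $B(\bTheta^*, \bzero)$ is exactly the log-partition $A(\bTheta^*)$. Specializing to $\bb = t\,\be_j$ with $t \in [0,1]$ and applying the Lagrange-form second-order Taylor expansion in the scalar variable $b_j$ along the segment $\{k\be_j : k \in [0,t]\}$, I would write
\begin{equation*}
\log \mathbb{E}[\exp(tX_j)] \;=\; t\,\frac{\partial B(\bTheta^*, \bzero)}{\partial b_j} + \frac{t^2}{2}\,\frac{\partial^2 B(\bTheta^*, k^* \be_j)}{\partial^2 b_j}
\end{equation*}
for some $k^* \in [0, t] \subseteq [0,1]$. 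The first derivative equals $\mathbb{E}[X_j] \le C_1$, and the second derivative is at most $C_2$ by Assumption~\ref{asm:1}, so the cumulant bound $\log \mathbb{E}[\exp(tX_j)] \le tC_1 + t^2 C_2/2$ holds uniformly for $t \in [0,1]$.

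For the first inequality, I would invoke the scalar Chernoff bound \eqref{eq:chernoff-1} at $t = 1$, obtaining $\text{P}(X_j \ge \epsilon_1) \le \exp(-\epsilon_1)\,\mathbb{E}[\exp(X_j)] \le \exp(C_1 + C_2/2 - \epsilon_1)$, which is exactly the claim.

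For the sample-average inequality, I would apply the independent-variables Chernoff bound \eqref{eq:chernoff-2} to $\sum_{i=1}^n X_{ij} = n\bar{X}_j$, yielding
\begin{equation*}
\text{P}(\bar{X}_j \ge \epsilon_2) \;\le\; \min_{t \in (0,1]} \exp\!\Bigl(-n t(\epsilon_2 - C_1) + \tfrac{n t^2 C_2}{2}\Bigr),
\end{equation*}
and then optimize the quadratic by choosing $t^* = (\epsilon_2 - C_1)/C_2 = \sqrt{2 \log p/n}/C_2$. Substituting $t^*$ collapses the exponent to $-n(\epsilon_2 - C_1)^2/(2C_2)$, which is the claimed bound.

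The main obstacle I anticipate is bookkeeping rather than any conceptual difficulty: one has to verify that the segment $\{k\be_j : k \in [0,1]\}$ stays inside the domain where Assumption~\ref{asm:1} controls the second partial of $B$, and to check that $t^* \le 1$ in the second step, which is equivalent to $\sqrt{2\log p/n} \le C_2$ and therefore implicit in the sample-size requirement of Theorem~\ref{thm:sparsistency}. Once those two points are handled, both bounds follow by a direct substitution into the Chernoff exponent.
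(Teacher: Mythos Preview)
Your proposal is correct and follows essentially the same route as the paper: both identify $\log\mathbb{E}[\exp(tX_j)]$ with $B(\bTheta^*,t\be_j)-B(\bTheta^*,\bzero)$, bound it by $tC_1+\tfrac{t^2}{2}C_2$ via a second-order Taylor expansion and Assumption~\ref{asm:1}, and then feed this cumulant bound into the Chernoff inequality (with $t=1$ for the first claim and with the optimized $t^*=(\epsilon_2-C_1)/C_2$ for the second). If anything, your version is slightly more careful than the paper's in explicitly restricting the optimization to $t\in(0,1]$ and flagging the need for $t^*\le 1$, a point the paper handles only implicitly.
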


\begin{proof}
We start with proving the first inequality. To this end, consider the following equation due to Taylor expansion:
\begin{align}
\label{eq:moment}
\begin{split}
\log \mathbb{E} \left[ \exp(X_j)\right] 
= &B(\bTheta^*, \bm{0}+ \be_j ) - B(\bTheta^*, \bm{0}) 
= \grad^\top B(\bTheta^*, \bm{0})  \be_j + \frac{1}{2} \be_j^\top \grad^2 B(\bTheta^*, k \be_j)  \be_j\\
= &\mathbb{E} [X_j] + \frac{1}{2} \frac{\partial^2}{\partial b_j^2} B(\bTheta^*,\bm{0} + k\be_j) \leq C_1 + C_2/2,
\end{split}
\end{align}
where $k \in [0,1]$, $\be_j$ is a vector whose $j^{\text{th}}$ component is one and zeros elsewhere, and the last inequality is due to Assumption~\ref{asm:1}. Then, let $t=1$ and $X=X_j$ in Lemma~\ref{lem:chernoff}, 
\begin{equation*}
\text{P} \left( X_j \geq \epsilon_1 \right) =  \exp(-\epsilon_1) \mathbb{E}\left[ \exp(X_j) \right] \le \exp(C_1+C_2/2 - \epsilon_1).
\end{equation*}

Now, we prove the second bound. For any $0<a<1$ and some $k \in [0,1]$, with Taylor expansion,
\begin{align}
\label{eq:moment-a}
\begin{split}
\log \mathbb{E} \left[ \exp(aX_i)\right] =& B(\bTheta^*, \bm{0}+ a\be_j ) - B(\bTheta^*, \bm{0}) 
= a\grad^\top B(\bTheta^*, \bm{0})  \be_j + \frac{a^2}{2}\be_j^\top \grad^2 B(\bTheta^*, \bm{0} + ak\bm{e}_j)  \be_j \\
=& a\mathbb{E}(X_j) + \frac{a^2}{2} \frac{\partial^2}{\partial b_j^2} B(\bTheta^*,\bm{0} + ak\be_j) \le a C_1 + \frac{a^2}{2}C_2,
\end{split}
\end{align}
where the last inequality is due to Assumption~\ref{asm:1}. Then, following the proof technique above, we have
\begin{align*}
\text{P}\left(\frac{1}{n} \sum_{i=1}^n X_i \geq \epsilon_2 \right) =& \text{P}\left( \sum_{i=1}^n X_i \geq n\epsilon_2 \right) \leq \min_{t>0} \exp(-t n \epsilon_2) \prod_{i=1}^n  \mathbb{E}\left[ \exp\left(t X_i\right) \right] \\
\leq & \min_{t>0} \exp(-tn\epsilon_2) \prod_{i=1}^n \exp \left(C_1 t + \frac{C_2}{2}t^2\right)\\
 = & \min_{t>0} \exp  \left[(C_1-\epsilon_2)nt + \frac{n C_2}{2}t^2\right]  \leq \exp \left[-\frac{n (\epsilon_2-C_1)^2}{2 C_2} \right],
\end{align*}
where the minimum is obtained when $t=\frac{\epsilon_2-C_1}{C_2}$, and we have used the fact that $\epsilon_2 > C_1$.
\end{proof}

\subsection{Auxiliary Results}
\label{sec:auxiliary}
\begin{lemma}
\label{lem:bound-der}
Let $r :=  4C_5\lambda$. Then with probability of at least $1-\left(\left(\exp\left(C_1+{C_2}/{2}\right)+8\right)p^{-2}+p^{{-1}/{C_2}}\right)$, the following two inequalities simultaneously hold:
\begin{equation}
\label{eq:bound-der-1}
\norm{\grad F(\bTheta^*)}_\infty \leq 2 \left[C_3(3\log p + \log n) + (3\log p + \log n)^2 \right]\sqrt{\frac{\log p}{n}} + 2C_4\left( C_1+ \sqrt{\frac{2\log p}{n}} \right),
\end{equation}
\begin{equation}
\label{eq:bound-der-2}
\norm{\tilde{\bTheta}_S - \bTheta^*_S}_\infty \leq r.
\end{equation}
\end{lemma}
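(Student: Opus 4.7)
The plan is to combine a truncation step that tames the unbounded count variables (via Lemma~\ref{lem:bound-con}) with the concentration results in Section~\ref{sec:lemma}, and then close the argument with a primal-dual-witness manipulation that relies on Assumptions~\ref{asm:3} and~\ref{asm:4}.

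First I would define a ``good event'' $\mathcal{E}$ on which (a) $\max_{i,j} x_{ij} \le \epsilon_1 := 3\log p + \log n$ and (b) $\max_j \frac{1}{n}\sum_i x_{ij} \le \epsilon_2 := C_1 + \sqrt{2\log p / n}$. The first inequality in Lemma~\ref{lem:bound-con}, together with a union bound over the $np$ entries, controls (a) with failure probability at most $\exp(C_1 + C_2/2)\,p^{-2}$, while the second inequality handles (b) with failure probability at most $p^{-1/C_2}$. These two contributions furnish the leading two terms of the stated failure probability, and every subsequent estimate is performed on $\mathcal{E}$, where the counts behave like bounded random variables with cap $\epsilon_1$ and sample means with cap $\epsilon_2$.

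Next I would bound $\norm{\grad F(\bTheta^*)}_\infty$ on $\mathcal{E}$. A generic off-diagonal coordinate of $\grad F(\bTheta^*)$ is (twice, after accounting for the symmetry in $\bTheta$) the sample average of $\lambda^*_{ij} x_{ij'} - x_{ij} x_{ij'}$. Inserting $\pm \mathbb{E}[X_j\mid \bx_{i,-j}]\,x_{ij'}$ splits each coordinate into a bias piece $\tfrac{1}{n}\sum_i (\lambda^*_{ij} - \mathbb{E}[X_j\mid\bx_{i,-j}])\,x_{ij'}$, bounded on $\mathcal{E}$ by $C_4 \epsilon_2$ using Assumption~\ref{asm:2}; and a centered fluctuation piece $\tfrac{1}{n}\sum_i \bigl(\mathbb{E}[X_j X_{j'}\mid \bX_{-j}] - X_j X_{j'}\bigr)$ controlled on $\mathcal{E}$ by Lemma~\ref{lem:con-jk} (contributing $\epsilon_1^2\sqrt{\log p/n}$) and Lemma~\ref{lem:con-y} (contributing $C_3 \epsilon_1 \sqrt{\log p/n}$). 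The diagonal derivatives are treated analogously with Lemmas~\ref{lem:con-j} and~\ref{lem:con-yj}. A union bound over all coordinates adds $8p^{-2}$ to the failure probability and produces precisely~\eqref{eq:bound-der-1}.

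Finally, for~\eqref{eq:bound-der-2}, let $\tilde{\bTheta}$ denote the standard primal-dual witness: the minimizer of $F(\bTheta)+\lambda\norm{\bTheta}_{1,\text{off}}$ restricted to $\bTheta_I = \bzero$. The corresponding KKT condition reads $\grad F(\tilde{\bTheta})_S + \lambda \bm{z}_S = \bzero$ with $\norm{\bm{z}_S}_\infty \le 1$. Taylor expanding around $\bTheta^*$, using Assumption~\ref{asm:4} for the remainder, and inverting $\bH_{SS}$ via Assumption~\ref{asm:3} yields
\begin{equation*}
\tilde{\bTheta}_S - \bTheta^*_S \;=\; -\bH_{SS}^{-1}\bigl[\grad F(\bTheta^*)_S + \bm{R}(\bDelta)_S + \lambda \bm{z}_S\bigr].
\end{equation*}
Taking $\norm{\cdot}_\infty$ and plugging in $\norm{\bH_{SS}^{-1}}_\infty \le C_5$, $\norm{\bm{R}(\bDelta)_S}_\infty \le C_6 r^2$, $\norm{\bm{z}_S}_\infty \le 1$, the bound~\eqref{eq:bound-der-1}, and the lower bound on $\lambda$ (which forces the gradient term to be at most $\alpha\lambda/4$), together with the constraint $r \le 1/(C_5 C_6)$ that converts $C_5 C_6 r^2$ into at most $r$, delivers $\norm{\tilde{\bTheta}_S - \bTheta^*_S}_\infty \le r = 4C_5\lambda$. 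The only step that I expect to require real care is the circular dependence of the Taylor remainder on $\bDelta$ itself; the standard fix is a short continuity argument showing that the restricted minimum $\tilde{\bTheta}$ must in fact lie in the $\ell_\infty$-ball of radius $r$ around $\bTheta^*_S$, after which the bound derived under that hypothesis becomes self-consistent.
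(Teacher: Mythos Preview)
Your proposal is correct and tracks the paper's proof almost exactly: the truncation via Lemma~\ref{lem:bound-con} to define the good event, the bias/fluctuation decomposition of each gradient coordinate using Assumption~\ref{asm:2}, and the invocation of Lemmas~\ref{lem:con-jk}--\ref{lem:con-yj} all match. For~\eqref{eq:bound-der-2} the paper resolves the circular dependence you flag by defining the map $G(\bDelta_S) := -\bH_{SS}^{-1}[\grad_S F(\bTheta^*+\bDelta_S) + \lambda\hat{\bZ}_S] + \bDelta_S$ and checking that it sends the $\ell_\infty$-ball of radius $r$ into itself, then invoking a fixed-point theorem; this is precisely the ``short continuity argument'' you anticipated, so your sketch is complete once you spell that step out.
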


\begin{proof} 
We prove \eqref{eq:bound-der-1} and \eqref{eq:bound-der-2} in turn. 

\subsubsection*{Proof of \eqref{eq:bound-der-1}}
To begin with, we prove \eqref{eq:bound-der-1}. By the definition of $F$ in \eqref{eq:F}, for $j<j'$, the derivative of $F(\bTheta^*)$ is:
\begin{equation}
\label{eq:der-jk}
\frac{\partial F(\bTheta^*)}{\partial \theta_{jj'}} = \frac{1}{n} \sum_{i=1}^n \left[-x_{ij'}x_{ij} + \lambda_{ij}^* x_{ij'} - x_{ij}x_{ij'} + \lambda_{ij'}^* x_{ij}\right] = -2\mathbb{E}_{\mathbb{X}} [X_j X_{j'}] + \frac{1}{n}\sum_{i=1}^n \lambda_{ij}^* x_{ij'} + \frac{1}{n}\sum_{i=1}^n \lambda_{ij'}^* x_{ij}.
\end{equation}
and 
\begin{equation}
\label{eq:der-jk-diag}
\frac{\partial}{\partial \theta_{jj}}F(\bTheta^*) = \frac{1}{n}\sum_{i=1}^n \left[-x_{ij} + \lambda_{ij}^* \right] = -\mathbb{E}_{\mathbb{X}} [X_j ] + \frac{1}{n}\sum_{i=1}^n \lambda^*_{ij},
\end{equation}
 where $\mathbb{E}_{\mathbb{X}} [X_j X_{j'}] := \frac{1}{n}\sum_{i=1}^n x_{ij}x_{ij'}$ and $\mathbb{E}_{\mathbb{X}} [X_j ] := \frac{1}{n}\sum_{i=1}^n x_{ij}$ are the expectations of $X_j X_{j'}$ and $X_j$ over the empirical distribution given by the dataset $\mathbb{X}$.

Then, by defining $\mathbb{E}[X_j X_{j'}]$ as the expectation of the multiplication of two components of an multivariate square root Poisson random vector whose distribution is parameterized by $\bTheta^*$, and by Assumption~\ref{asm:2}, \eqref{eq:der-jk} can be controlled via
\allowdisplaybreaks
\begin{align*}
%\label{eq:bound-der-main}
\abs{\frac{\partial}{\partial \theta_{jj'}}F(\bTheta^*)} 
= & \abs{\frac{1}{n}\sum_{i=1}^n \lambda_{ij}^* x_{ij'} -\mathbb{E}[X_j X_{j'}] + \frac{1}{n}\sum_{i=1}^n \lambda_{ij'}^* x_{ij}-\mathbb{E}[X_j X_{j'}] +2\mathbb{E}[X_j X_{j'}]-2\mathbb{E}_{\mathbb{X}} [X_j X_{j'}] }\\
\leq & \abs{\frac{1}{n}\sum_{i=1}^n \lambda_{ij}^* x_{ij'} -\mathbb{E}[X_j X_{j'}]} + \abs{\frac{1}{n}\sum_{i=1}^n \lambda_{ij'}^* x_{ij}-\mathbb{E}[X_j X_{j'}]} + 2\abs{\mathbb{E}_{\mathbb{X}} [X_j X_{j'}]  - \mathbb{E}[X_j X_{j'}] }\\
= & \abs{\frac{1}{n}\sum_{i=1}^n \left(\mathbb{E}[X_j \given \bX_{-j} = \bx_{i,-j}] + \lambda_{ij}^*-\mathbb{E}[X_j \given \bX_{-j} = \bx_{i,-j}]\right) x_{ij'} -\mathbb{E}[X_j X_{j'}]} \\
+ & \abs{\frac{1}{n}\sum_{i=1}^n \left( \mathbb{E}[X_{j'} \given \bX_{-j'} = \bx_{i,-j'}] + \lambda_{ij'}^*-\mathbb{E}[X_{j'} \given \bX_{-j'} = \bx_{i,-j'}] \right) x_{ij}-\mathbb{E}[X_j X_{j'}]} \\
+ & 2\abs{\mathbb{E}_{\mathbb{X}} [X_j X_{j'}]  - \mathbb{E}[X_j X_{j'}] }\\
\leq & \abs{\frac{1}{n}\sum_{i=1}^n \left(\mathbb{E}[X_j \given \bX_{-j} = \bx_{i,-j}] \right) x_{ij'} -\mathbb{E}[X_j X_{j'}]} + \frac{1}{n}\sum_{i=1}^n \abs{\lambda_{ij}^*-\mathbb{E}[X_j \given \bX_{-j} = \bx_{i,-j}]} x_{ij'}\\
+ & \abs{\frac{1}{n}\sum_{i=1}^n \left(\mathbb{E}[X_{j'} \given \bX_{-j'} = \bx_{i,-j'}] \right) x_{ij} -\mathbb{E}[X_j X_{j'}]} + \frac{1}{n}\sum_{i=1}^n \abs{\lambda_{ij'}^*-\mathbb{E}[X_{j'} \given \bX_{-j'} = \bx_{i,-j'}]} x_{ij}\\
+ & 2\abs{\mathbb{E}_{\mathbb{X}} [X_j X_{j'}]  - \mathbb{E}[X_j X_{j'}] }\\
\leq & \abs{ \frac{1}{n}\sum_{i=1}^n \mathbb{E}[X_j \given \bX_{-j} = \bx_{i,-j}] x_{ij'} -\mathbb{E}[X_j X_{j'}]}
+ \abs{\frac{1}{n}\sum_{i=1}^n \mathbb{E}[X_{j'} \given \bX_{-j'} = \bx_{i,-j'}] x_{ij}-\mathbb{E}[X_j X_{j'}]} \\
+ & 2\abs{\mathbb{E}_{\mathbb{X}} [X_j X_{j'}]  - \mathbb{E}[X_j X_{j'}] }+ C_4(\mathbb{E}_{\mathbb{X}} [X_j]  + \mathbb{E}_{\mathbb{X}} [X_{j'}] )\\
= & \abs{ \frac{1}{n}\sum_{i=1}^n \mathbb{E}[X_j X_{j'} \given \bX_{-j} = \bx_{i,-j}] -\mathbb{E}[X_j X_{j'}]}
+ \abs{\frac{1}{n}\sum_{i=1}^n \mathbb{E}[X_j X_{j'} \given \bX_{-j'} = \bx_{i,-j'}]-\mathbb{E}[X_j X_{j'}]} \\
+ & 2\abs{\mathbb{E}_{\mathbb{X}} [X_j X_{j'}] - \mathbb{E}[X_j X_{j'}]  }+ C_4(\mathbb{E}_{\mathbb{X}} [X_j]  + \mathbb{E}_{\mathbb{X}} [X_{j'}] )\\
= & 2 \abs{ \mathbb{E}_{\mathbb{X}}[ \mathbb{E}[X_j X_{j'} \given \mathbf{X}_{-j}] ] -\mathbb{E}[X_j X_{j'}] } + 2\abs{\mathbb{E}_{\mathbb{X}}[ X_j X_{j'}] - \mathbb{E}[X_j X_{j'}] }+ C_4(\mathbb{E}_{\mathbb{X}}[ X_j] + \mathbb{E}_{\mathbb{X}}[ X_{j'} ])\\
= & 2 \abs{ \mathbb{E}_{\mathbb{X}}[ \mathbb{E}[X_j X_{j'} \given \mathbf{X}_{-j}] ] -\mathbb{E}[\mathbb{E}[X_j X_{j'} \given \bX_{-j}]] } + 2\abs{\mathbb{E}_{\mathbb{X}}[ X_j X_{j'}] - \mathbb{E}[X_j X_{j'}] }+ C_4(\mathbb{E}_{\mathbb{X}}[ X_j] + \mathbb{E}_{\mathbb{X}}[ X_{j'} ]),
\end{align*} 
where we have used the law of total expectation in the last equality.

Similarly, \eqref{eq:der-jk-diag} can be controlled via
\begin{align*}
\abs{\frac{\partial}{\partial \theta_{jj}}F(\bTheta^*)} = &
\abs{-\mathbb{E}_{\mathbb{X}} [X_j ] + \frac{1}{n}\sum_{i=1}^n \lambda^*_{ij}} 
= \abs{-\mathbb{E}_{\mathbb{X}} [X_j ] + \frac{1}{n}\sum_{i=1}^n \left(\mathbb{E}[X_j \given \bX_{-j} = \bx_{i,-j}] + \lambda^*_{ij} - \mathbb{E}[X_j \given \bX_{-j} = \bx_{i,-j}]\right)}  \\
= & \abs{-\mathbb{E}_{\mathbb{X}} [X_j ] + \mathbb{E}[X_j] - \mathbb{E}[X_j] + \mathbb{E}_{\mathbb{X}}[\mathbb{E}[X_j \given \bX_{-j}]] + \frac{1}{n}\sum_{i=1}^n \left(\lambda^*_{ij} - \mathbb{E}[X_j \given \bX_{-j} = \bx_{i,-j}]\right)}  \\
\leq & \abs{\mathbb{E}_{\mathbb{X}}[\mathbb{E}[X_j \given \bX_{-j}]]-\mathbb{E}[X_j ] } + \abs{\mathbb{E}_{\mathbb{X}}[X_j]-\mathbb{E}[X_j]}  + C_4 \\
=  & \abs{\mathbb{E}_{\mathbb{X}}[\mathbb{E}[X_j \given \bX_{-j}]]-\mathbb{E}[\mathbb{E} [X_j \given \bX_{-j}] ] } + \abs{\mathbb{E}_{\mathbb{X}}[X_j]-\mathbb{E}[X_j]}  + C_4.
\end{align*} 

We define four events: 
\begin{gather*}
E_1 := \curly{\max_{j\ne j', j,j'\in\curly{1,2,\cdots,p}}\abs{\frac{\partial}{\partial \theta_{jj'}}F(\bTheta^*)} \geq 2(C_3\epsilon_1+\epsilon^2_1)  \sqrt{\frac{\log p}{n}} +2 C_4 \epsilon_2 },\\ 
E_2 := \curly{\max_{j\in\curly{1,2,\cdots,p}} \abs{\frac{\partial}{\partial \theta_{jj}}F(\bTheta^*)} \geq   (C_3+\epsilon_1)\sqrt{\frac{\log p}{n}} + C_4/n },\\
E_3 := \curly{ 0<\max_{i\in\curly{1,2,\cdots,n}}\norm{\bx_i}_{\infty}<\epsilon_1},\quad \text{and}\quad  E_4 := \curly{ 0<\max_{j \in \curly{1,2,\cdots,p}}\mathbb{E}_{\mathbb{X}}[X_j]<\epsilon_2},
\end{gather*}
where $\epsilon_1 := 3\log p + \log n$ and $\epsilon_2 := C_1 + \sqrt{\frac{2\log p}{n}}$ are defined in Lemma~\ref{lem:bound-con}. By Lemma \ref{lem:con-jk}, Lemma~\ref{lem:con-j}, Lemma~\ref{lem:con-y} and Lemma \ref{lem:con-yj}, it follows that 
\begin{equation}
\label{eq:e1-or-e2-1}
\text{P}(E_1\given E_3, E_4) \leq 4\exp(-2\log p)  \text{\quad and \quad} \text{P}(E_2\given E_3, E_4) \leq 4\exp(-2\log p).
\end{equation}
Therefore,
\begin{align}
\label{eq:e1-or-e2-2}
\begin{split}
\text{P}(E_1 \cup E_2) = & \text{P}(E_1 \cup E_2 \mid E_3,E_4) \text{P}(E_3,E_4)  +  \text{P}(E_1 \cup E_2 \mid E_3^c,E_4) \text{P}(E_3^c,E_4)  \\
+ & \text{P}(E_1 \cup E_2 \mid  E_3,E_4^c) \text{P}(E_3,E_4^c) + 
\text{P}(E_1 \cup E_2 \mid E_3^c, E_4^c) \text{P}(E_3^c,E_4^c)  \\ 
\leq & \text{P}(E_1\given E_3, E_4) +\text{P}(E_2\given E_3, E_4) + \text{P}(E_3^c,E_4) + \text{P}(E_3,E_4^c) + \text{P}(E_3^C,E_4^c)  \\
\leq &\text{P}(E_1\given E_3, E_4) +\text{P}(E_2\given E_3, E_4) + \text{P}(E_3^c) + \text{P}(E_4^c) \\
\leq & 8\exp(-2\log p) + \exp( C_1 + C_2/2 -\epsilon_1 )np + \exp \left[-\frac{n (\epsilon_2-C_1)^2}{2 C_2} \right]\\
= & 8\exp(-2\log p) + \frac{\exp(C_1+{C_2}/{2})}{p^2} + p^{-\frac{1}{C_2}},
\end{split}
\end{align}
where the superscript $c$ over an event represents the complement of that event, and the last inequality is due to \eqref{eq:e1-or-e2-1} and Lemma~\ref{lem:bound-con}. Also notice that by the definitions of $E_1$ and $E_2$,
\begin{equation*}
2(C_3\epsilon_1+\epsilon^2_1)  \sqrt{\frac{\log p}{n}} +2 C_4 \epsilon_2 > (C_3+\epsilon_1)\sqrt{\frac{\log p}{n}} + C_4/n .
\end{equation*}
Therefore, with probability of $1-\text{P}(E_1 \cup E_2) \ge 1-\left(\left(\exp\left(C_1+{C_2}/{2}\right)+8\right)p^{-2}+p^{{-1}/{C_2}}\right)$, neither $E_1$ nor $E_2$ occurs, and hence
\begin{align*}
\norm{\grad F(\bTheta^*)}_\infty \leq & 2(C_3\epsilon_1+\epsilon^2_1)  \sqrt{\frac{\log p}{n}} +2 C_4 \epsilon_2 \\
= & 2 \left[C_3(3\log p + \log n) + (3\log p + \log n)^2 \right]\sqrt{\frac{\log p}{n}} + 2C_4\left( C_1+ \sqrt{\frac{2\log p}{n}} \right). 
\end{align*}

\subsubsection*{Proof of \eqref{eq:bound-der-2}}

Then, we study \eqref{eq:bound-der-2}. We consider a map defined as $G(\bm{\Delta}_S) := - \bH_{SS}^{-1} \left[ \grad_S F(\bTheta^* + \bm{\Delta}_S )+ \lambda \hat{\bZ}_{S} \right] + \bm{\Delta}_{S}$.
If $\norm{\bm{\Delta}}_\infty \leq r$, by Taylor expansion of $\grad_S F(\bTheta^* + \bm{\Delta})$ centered at $\grad_SF(\bTheta^*)$,
\begin{align*}
\Norm{G(\bm{\Delta}_{S})}_\infty \hspace{-2mm} = & \Norm{- \bH_{SS}^{-1} \left[ \grad_S F(\bTheta^* ) + \bH_{SS} \bDelta_S + \bm{R}_S(\bDelta) + \lambda \hat{\bZ}_{S} \right] \hspace{-1mm} + \hspace{-1mm} \bm{\Delta}_{S} }_\infty \hspace{-2mm} = \Norm{ - \bH_{SS}^{-1} \left( \grad_S F(\bTheta^*) + \bm{R}_S(\bm{\Delta}) +\lambda \hat{\bZ}_{S} \right)}_\infty\\
\leq & \Norm{\bH_{SS}^{-1}}_\infty (\norm{\grad_S F(\bTheta^*) }_\infty + \norm{R_S(\bm{\Delta})}_\infty + \lambda \norm{\hat{\bZ}_{S}}_\infty)
\leq  (C_5(\lambda+C_6r^2+\lambda) = C_5C_6r^2 + 2C_5\lambda,
\end{align*}
where the inequality is due to $\norm{\grad_S F(\bTheta^* )}_\infty \leq \lambda$ conditioning on $E_1^c\cap E_2^c$ and according to \eqref{eq:bound-der-1}. Then, based on the definition of $r$, we can derive the upper bound of $\Norm{G(\bm{\Delta}_{S})}_\infty$ as $\Norm{G(\bm{\Delta}_{S})}_\infty \leq r/2 + r/2 = r$.

Therefore, according to the fixed point theorem \citep{ortega2000iterative, yang2011use}, there exists $\bm{\Delta}_S$ satisfying  $G(\bm{\Delta}_S) = \bm{\Delta}_S$, which indicates $\grad_S F(\bTheta^* + \bm{\Delta} ) + \lambda \hat{\bZ}_{S} = \bm{0}$.
Considering that the optimal solution to \eqref{eq:restricted} is unique, 
$\tilde{\bm{\Delta}}_{S} = \bm{\Delta}_S$, whose infinite norm is bounded by $\norm{\tilde{\bm{\Delta}}_S}_\infty \leq r$ , with probability larger than $1-\left(\left(\exp\left(C_1+{C_2}/{2}\right)+8\right)p^{-2}+p^{{-1}/{C_2}}\right)$.
\end{proof}

\begin{lemma}
\label{lem:dual-optimal}
Let $\hat{\bTheta}$ be an optimal solution to \eqref{eq:pre}, and $\hat{\bZ}$ be the corresponding dual solution. If $\hat{\bZ}$ satisfies  $\norm{\hat{\bZ}_{I}}_\infty < 1$, then any given optimal solution to \eqref{eq:pre} $\tilde{\bTheta}$  satisfies $\tilde{\bTheta}_{I} = \bm{0}$. Moreover, if $\bH_{SS}$ is positive definite, then the solution to \eqref{eq:pre} is unique.
\end{lemma}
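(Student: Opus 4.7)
The plan is the textbook primal-dual witness argument: convexity of the $l_1$-regularized objective combined with the strict dual feasibility $\norm{\hat{\bZ}_I}_\infty < 1$ will pin the $I$-coordinates of every minimizer to zero, after which $\bH_{SS} \succ 0$ will force the remaining $S$-coordinates to agree. Let $f(\bTheta) := F(\bTheta) + \lambda \norm{\bTheta}_{1,\text{off}}$. Since $F$ is convex (a sum of linear and exponential terms in $\bTheta$) and the penalty is convex, so is $f$, and the optimality of $\hat{\bTheta}$ is equivalent to the KKT condition $\grad F(\hat{\bTheta}) + \lambda \hat{\bZ} = \bzero$ with $\hat{\bZ} \in \partial \norm{\hat{\bTheta}}_{1,\text{off}}$.

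For the first assertion I would take any optimal $\tilde{\bTheta}$ and add the convexity inequality $F(\tilde{\bTheta}) \ge F(\hat{\bTheta}) + \grad F(\hat{\bTheta})^\top (\tilde{\bTheta} - \hat{\bTheta})$ to the subgradient inequality $\norm{\tilde{\bTheta}}_{1,\text{off}} \ge \norm{\hat{\bTheta}}_{1,\text{off}} + \hat{\bZ}^\top (\tilde{\bTheta} - \hat{\bTheta})$. The linear terms cancel because $\grad F(\hat{\bTheta}) = -\lambda \hat{\bZ}$, leaving $f(\tilde{\bTheta}) \ge f(\hat{\bTheta})$; equality, implied by optimality of $\tilde{\bTheta}$, then forces equality in each of the two inequalities. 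A standard fact about subgradients of convex functions (equality in the defining inequality implies shared subgradient) then gives $\hat{\bZ} \in \partial \norm{\tilde{\bTheta}}_{1,\text{off}}$. For any $(j,j') \in I$, the coordinate subdifferential of $|\cdot|$ at a nonzero argument is the singleton $\curly{\text{sign}(\tilde{\Theta}_{jj'})}$, of modulus one, which is incompatible with $|\hat{Z}_{jj'}| < 1$; hence $\tilde{\Theta}_{jj'} = 0$ for every $(j,j') \in I$, i.e., $\tilde{\bTheta}_I = \bzero$.

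For uniqueness, the equality in the $F$-convexity inequality above also says that $F$ is affine along the segment joining $\hat{\bTheta}$ and $\tilde{\bTheta}$; i.e., $\bv^\top \grad^2 F(\bTheta(t)) \bv = 0$ for all $t \in [0,1]$, where $\bv := \tilde{\bTheta} - \hat{\bTheta}$ and, by the first part, $\bv_I = \bzero$. The remaining task is to conclude $\bv_S = \bzero$ from $\bH_{SS} \succ 0$. The snag is that the assumed positive definiteness is only at $\bTheta^*$, not along the segment, so I will have to lift it to every point of the segment.

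The lift exploits the structure of $\grad^2 F(\bTheta)$ for the Poisson pseudo-likelihood in \eqref{eq:F}: it is a sum over $(i,j)$ of strictly positive weights $\exp(\theta_{jj} + \btheta_{j,-j}^\top \bx_{i,-j})$ times rank-one outer products in the design vectors $(1, \bx_{i,-j}^\top)^\top$, embedded into the $\bTheta$-parameter space after imposing the symmetry $\theta_{jj'} = \theta_{j'j}$. Thus the kernel of any principal submatrix of $\grad^2 F(\bTheta)$ depends only on the data, not on $\bTheta$; so $\bH_{SS} = \grad^2 F(\bTheta^*)_{SS} \succ 0$ propagates to $\grad^2 F(\bTheta)_{SS} \succ 0$ at every $\bTheta$, contradicting $\bv^\top \grad^2 F(\bTheta(t)) \bv = 0$ unless $\bv_S = \bzero$. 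The convex-analysis step is essentially boilerplate; the main (minor) obstacle I anticipate is being careful with the Hessian-structure argument, especially in correctly identifying the design vectors after quotienting by the symmetry constraint so that ``$\bH_{SS}$ at $\bTheta^*$'' genuinely translates into ``$\grad^2 F(\bTheta)_{SS}$ everywhere.''
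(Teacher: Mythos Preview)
Your argument is correct and is precisely the primal--dual witness reasoning the paper invokes by citing Lemma~1 of Wainwright (2009), Lemma~1 of Ravikumar et al.\ (2010), and Lemma~2 of Yang et al.\ (2011); the paper gives no self-contained proof, so you have supplied exactly what those references contain. Your extra care in lifting $\bH_{SS}\succ 0$ from $\bTheta^*$ to every point of the segment---via the observation that $\grad^2 F(\bTheta)$ is a sum of strictly positive weights times data-only rank-one terms, hence has a $\bTheta$-independent null space---addresses a wrinkle the paper (and its citations, where the loss is quadratic or the Hessian is evaluated at the optimum) does not explicitly flag.
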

\begin{proof}
Specifically, following the same rationale as Lemma 1 in \citealt{wainwright2009sharp},  Lemma 1 in \citealt{ravikumar2010high}, and Lemma 2 in \citealt{yang2011use}, we can derive Lemma \ref{lem:dual-optimal} characterizing the optimal solution of \eqref{eq:pre}.
\end{proof}

\subsection{Proof of Theorem~\ref{thm:sparsistency}}
\label{sec:main-proof}
The proof follows the primal-dual witness (PDW) technique, which is widely used in this line of research \citep{wainwright2009sharp, ravikumar2010high, yang2011use, yang2015graphical}. Specifically,  by Lemma \ref{lem:dual-optimal}, we can prove the sparsistency by building an optimal solution to \eqref{eq:pre} satisfying  $\norm{\hat{\bZ}_I}_\infty <1$, which is summarized as \emph{strict dual feasibility} (SDF). To this end, we apply PDW to build a qualified optimal solution with the assumption that $\bH_{SS}$ is positively definite.

\subsubsection*{Solve a Restricted Problem}
First of all, we derive the KKT condition of \eqref{eq:pre}:
\begin{equation}
\label{eq:kkt}
\grad F(\hat{\bTheta}) + \lambda \hat{\bZ} = \bm{0}.
\end{equation}
 
To construct an optimal optimal primal-dual pair solution, we define $\tilde{\bTheta}$ as an optimal solution to the restricted problem:
\begin{align}
\label{eq:restricted}
\begin{split}
\tilde{\bTheta} := \min_{\bTheta}  F(\bTheta)+\lambda \norm{\bTheta}_1,
\end{split}
\end{align}
with $\bTheta_I = \mathbf{0}$, where $\tilde{\bTheta}$ is unique according to Lemma \ref{lem:dual-optimal} with the assumption that $\bH_{SS} \succ \mathbf{0}$.  Denote the subgradient  corresponding to $\tilde{\bTheta}$  as $\tilde{\bZ}$. Then $(\tilde{\bTheta},\tilde{\bZ})$ is optimal for the restricted problem \eqref{eq:restricted}. Therefore, $\tilde{\bZ}_S$ can be determined according to the values of $\tilde{\bTheta}_S$ via the KKT conditions of \eqref{eq:restricted}. As a result,
\begin{equation}
\label{eq:kkt-s}
\grad_S F(\tilde{\bTheta}) + \lambda \tilde{\bZ}_S = \bm{0},
\end{equation}
where $\grad_S$ represents the gradient components with respect to $S$. Furthermore, by letting $\hat{\bTheta} = \tilde{\bTheta}$, we determine $\tilde{\bZ}_I$ according to \eqref{eq:kkt}. It remains to show that $\tilde{\bZ}_I$ satisfies SDF.

\subsubsection*{Check SDF}
Now, we demonstrate that $\tilde{\bTheta}$ and $\tilde{\bZ}$ satisfy SDF. By \eqref{eq:kkt-s}, and by the Taylor expansion of $\grad_S F(\tilde{\bTheta})$, we have that 
\begin{equation}
\label{eq:tilde-delta-s}
\bH_{SS} \tilde{\bm{\Delta}}_{S} + \grad_S F(\bTheta^*) + \bm{R}_S(\tilde{\bm{\Delta}}) +\lambda \tilde{\bZ}_{S} = \bm{0} \Rightarrow \tilde{\bm{\Delta}}_{S} =\bH_{SS}^{-1} \left[-\grad_S F(\bTheta^*) - \bm{R}_S(\bm{\tilde{\Delta}}) - \lambda \tilde{\bZ}_{S}\right],
\end{equation}
where $\tilde{\bm{\Delta}} := \tilde{\bTheta} - \bTheta^*$, $\bm{R}_S(\tilde{\bDelta})$ represents the components of $\bm{R}(\bDelta)$ corresponding to $S$, and we have used the fact that $\bH_{SS}$ is positive definite and hence invertible. By the definition of $\tilde{\bTheta}$ and $\tilde{\bZ}$,
\begin{equation}
\label{eq:delta}
\grad F(\tilde{\bTheta}) + \lambda \tilde{\bZ} = \mathbf{0} \Rightarrow \grad F(\bTheta^*) + \bH \tilde{\bDelta} + \bm{R}(\tilde{\bDelta}) + \lambda\tilde{\bZ} = \bm{0} \Rightarrow \grad_I F(\tilde{\bTheta})  + \bH_{IS} \tilde{\bDelta}_S+ \bm{R}_I(\tilde{\bDelta}) + \lambda \tilde{\bZ}_I=\bm{0},
\end{equation}
where $\bm{R}_I(\tilde{\bDelta})$ represents the components of $\bm{R}(\bDelta)$ corresponding to $I$,  and we have used the fact that $\tilde{\bDelta}_I=\bm{0}$ because $\tilde{\bTheta}_I=\bTheta^*=\mathbf{0}$. As a result, 
\begin{align}
\label{eq:tilde-z-i-1}
\lambda \norm{\tilde{\bZ}_{I}}_\infty 
= & \norm{-\bH_{IS} \tilde{\bm{\Delta}}_{S}- \grad_I F(\bTheta^*) -\bm{R}_I(\tilde{\bm{\Delta}})}_\infty \nonumber \\
\leq &\Norm{\bH_{IS}   \bH_{SS}^{-1} \left[-\grad_S F(\bTheta^*) - \bm{R}_S(\tilde{\bm{\Delta}} )- \lambda \tilde{\bZ}_{S}\right]}_\infty + \norm{\grad_I F(\bTheta^*) + \bm{R}_I(\tilde{\bm{\Delta}})}_\infty
\nonumber \\ 
\le & \Norm{\bH_{IS}   \bH_{SS}^{-1} }_{\infty} \Norm{\grad_S F(\bTheta^*) + \bm{R}_S(\tilde{\bm{\Delta}})}_{\infty} + \Norm{\bH_{IS} \bH_{SS}^{-1} }_{\infty} \Norm{\lambda \tilde{\bZ}_{S}}_{\infty} + \norm{\grad_I F(\bTheta^*) + \bm{R}_I(\tilde{\bm{\Delta}})}_\infty \nonumber \\
\le & (1-\alpha) \left(  \norm{\grad_S F(\bTheta^*)}_\infty+\norm{\bm{R}_S(\tilde{\bm{\Delta}})}_\infty  \right) + (1-\alpha)\lambda + \left(  \norm{\grad_I F(\bTheta^*)}_\infty + \norm{\bm{R}_I (\tilde{\bm{\Delta}})}_\infty  \right) \nonumber \\
\leq & (2-\alpha) \left(  \norm{\grad F(\bTheta^*)}_\infty+\norm{\bm{R}(\tilde{\bm{\Delta}})}_\infty  \right) + (1-\alpha)\lambda,
\end{align}
where we have used \eqref{eq:tilde-delta-s} in the first inequality, and the third inequality is due to Assumption \ref{asm:3}. 

With \eqref{eq:tilde-z-i-1}, it remains to control $\norm{\grad F(\bTheta^*)}_\infty$ and $\norm{\bm{R}(\tilde{\bm{\Delta}})}_\infty$. On one hand, according to Lemma \ref{lem:bound-der} and the assumption on $\lambda$ in Theorem~\ref{thm:sparsistency},  $\norm{\grad F(\bTheta^*)}_\infty \leq 2 \left[3C_3\log p + C_3\log n + (3\log p + \log n)^2 \right]\sqrt{\frac{\log p}{n}} + 2C_4\left( C_1+ \sqrt{\frac{2\log p}{n}} \right)\leq \frac{\alpha \lambda}{4} $, with probability larger than $1-\left(\left(\exp\left(C_1+{C_2}/{2}\right)+8\right)p^{-2}+p^{{-1}/{C_2}}\right)$.  

On the other hand, according to Assumption~\ref{asm:4} and Lemma~\ref{lem:bound-der}, 
\begin{equation}
\label{eq:r-tilde-delta-inf}
\norm{\bm{R}(\tilde{\bm{\Delta}})}_\infty
 \leq C_6 \norm{\bDelta}_\infty^2 \leq  C_6 r^2 \leq C_6 (4C_5\lambda)^2 = \lambda \frac{64C_5^2 C_6}{\alpha} \frac{\alpha \lambda}{4} \le \left(C_7 \sqrt{\frac{\log^5 p}{n}}\right) \frac{64C_5^2 C_6}{\alpha}  \frac{\alpha \lambda}{4},
\end{equation}
where in the last inequality we have used the assumption $\lambda \propto \sqrt{\frac{\log^5 p}{n}}$ in Theorem~\ref{thm:sparsistency}, and hence there exists $C_7$ satisfying  $\lambda \leq  C_7 \sqrt{\frac{\log^5 p}{n}}$. Therefore, when we choose $n \ge \left(64C_7  C_5^2C_6/\alpha\right)^2\log^5 p$ as assumed in Theorem~1, then from \eqref{eq:r-tilde-delta-inf}, we can conclude that $\norm{\bm{R}(\tilde{\bm{\Delta}})}_\infty \leq \frac{\alpha \lambda }{4}$. As a result, $\lambda \norm{\hat{\bZ}_{I}}_\infty $ can be bounded by $\lambda \norm{\tilde{\bZ}_{I}}_\infty   < \alpha\lambda/2+\alpha\lambda/2 +(1-\alpha)\lambda = \lambda$.
Combined with Lemma \ref{lem:dual-optimal}, we demonstrate that any optimal solution of \eqref{eq:pre} satisfies $\tilde{\bTheta}_{I} = \bm{0}$. Furthermore, \eqref{eq:bound-der-2} controls the difference between the optimal solution of \eqref{eq:pre} and the real parameter by $\norm{\tilde{\bm{\Delta}}_S}_\infty \leq r$, by the fact that $r \le \norm{\bTheta^*_S}_\infty$ in Theorem~\ref{thm:sparsistency}, $\hat{\bTheta}_S$ shares the same sign with $\bTheta^*_S$. 

\subsection{Details of the ADR Experiments}

In this section, we provide more details regarding the parameter selection of the proposed method in the ADR application discussed in Section~\ref{sec:experiment}. More specifically, to achieve better performance, several empirical design chocies are made.

\subsubsection{Future Effects Discount}
To start with, we formulate the Poisson pseudo-likelihood for the ADR application. 
Assume patient $i$ with $\bt_i := \begin{bmatrix}
t_{i1} \ t_{i2} \ \cdots \ t_{in_i}
\end{bmatrix}^\top$, $\bo_i := \begin{bmatrix}
o_{i1} \ o_{i2} \ \cdots \ o_{in_i}
\end{bmatrix}^\top$, and $\bx_i := \begin{bmatrix}
x_{i1} & x_{i2} & \cdots & x_{in_i}
\end{bmatrix}^\top$.
Following the analysis in Section~\ref{sec:estimation}, the Poisson pseudo-likelihood method is equivalent to $n_i$ Poisson regressions. For the $j^{\text{th}}$ regression ($j \in [n_i]$), the response is $\bx_{ij}$, the covariates are 
\begin{equation*}
    \bphi(\abs{t_{i1} -t_{ij}})x_{i1},\bphi(\abs{t_{i2} -t_{ij}})x_{i2}, \cdots, \bphi(\abs{t_{ij-1} -t_{ij}})x_{ij-1}, \bphi(\abs{t_{ij+1} -t_{ij}})x_{ij+1},\cdots, \bphi(\abs{t_{i1} -t_{in_i}})x_{in_i},
\end{equation*} 
and the parameters are $\bomega$ and $\bw$ defined in \eqref{eq:omega-i}. 
As a result, for $N$ observed patients, the maximal Poisson pseudo-likelihood estimator proposed is equivalent to the solution to the $\sum_{i=1}^N n_i$ regressions formulated above.

When implementing the proposed Poisson pseudo-likelihood method for the ADR application, the performance can benefit from penalizing the effect from future events to previous ones, denoted by $\bphi(\abs{t_{ij'} -t_{ij}})x_{ij'}$, with $t_{ij'}>t_{ij}$. Therefore, we include two hyperparameters $\lambda_1$ and $\lambda_2$ to discount the effects of $\bphi(\abs{t_{ij'} -t_{ij}})x_{ij'}$. Specifically, the discount to the future effects is two fold: 
\begin{itemize}
    \item The time-threshold for the influence function is multiplied by $\lambda_1$, which is equivalent to assuming that the future events have shorter effects to previous ones. 
    \item $\bphi(\abs{t_{ij'} -t_{ij}})x_{ij'}$ is directly multiplied by $\lambda_2$, to decrease the effects. 
\end{itemize}

\subsubsection{Data Pre-Processing for $\bx$}
In the EHR dataset for the ADR application, $\bx$ contains lots of $0$ components, since consequent repeated medical events are often not recorded in the EHR dataset. The effects of events recorded only once in the dataset are not properly addressed when the corresponding covariates are directly defined as $0$, following the data aggregation procedure in Section~\ref{sec:data-aggreation}. To deal with this, we add 1 to all the defined $\bx$, which omits the $0$ components without changing the relative order of the components of $\bx$. Also, it should be noticed that this phenomenon is especially significant in EHR datasets, and thus may not be generalized to other applications.  

\subsubsection{Occurrence Time Ambiguity}
Another challenge caused by the EHR data is the occurrence time ambiguity: the occurrence time in the EHR data is often not accurate, which especially causes problems in the data aggregation process in Section~\ref{sec:data-aggreation}. Specifically, the data aggregation method proposed relies on the occurrence order of events to be able to determine consecutive occurrences, and to aggregate them to one group. Due to the inaccurate occurrence time of in EHR, the proposed aggregation tends to miss to aggregate consecutive occurrences.  

To solve this problem, we introduce the time ambiguity parameter $T_{ambiguity}$ denoting the time ambiguity the user choose tolerate when aggregating same-type observations. Specifically, if we observe a sequence of events like $\curly{A, B, A, A, A}$ and the occurrence time between the first $A$ and $B$ is less than $T_{ambiguity}$, we will still treat the observed $A$'s as consecutive occurrences and aggregate them to the first $A$. In other words, we cannot determine whether the occurrence order of the first $A$ and $B$ is true, since their occurrence time is too close. We choose to still conduct the aggregation and the tolerate this as a occurrence time ambiguity. 

\subsubsection{Extra Experiment Results}
With the aforementioned techniques, we select the future discount as $\lambda_1 = \lambda_2 = 0.1$, and the time ambiguity parameter as $T_{ambiguity} = 175$. Also, for the data pre-processing, we choose the minimum duration as $1000$, and the maximum time difference as $1500$ (with the unit as one third of a day). We define the influence function with $L=3$. Then, the AUC achieved by the regularization parameters selected by AIC reaches $0.8965$. Note that although further optimizing the hyper parameters can lead to even higher AUCs, such results may be under the risk of over fitting. 
Developing better hyperparameter selection procedures will be an interesting direction for future work.

\end{document}